\documentclass[11pt,letter]{article}
\usepackage[papersize={8.5in,11in},margin=1in]{geometry}
\usepackage{times}  
\usepackage{helvet}  
\usepackage{courier}  
\usepackage{graphicx}  
\usepackage[utf8]{inputenc} 
\usepackage[T1]{fontenc}    
\usepackage{hyperref}       
\usepackage{url}            
\usepackage{booktabs}       
\usepackage{amsfonts}       
\usepackage{nicefrac}       
\usepackage{microtype}      
\usepackage{pifont}
\usepackage{amssymb}
\setcounter{tocdepth}{3}
\usepackage{times}
\usepackage{helvet}
\usepackage{courier}
\newtheorem{theorem}{Theorem}
\newtheorem{assumption}{Assumption}
\newtheorem{proof}{Proof}
\newtheorem{remark}{Remark}
\newtheorem{corollary}{Corollary}
\newtheorem{lemma}{Lemma}
\usepackage{algorithm,algorithmicx,algpseudocode}
\usepackage{amsmath}
\usepackage{subcaption}
\usepackage{booktabs}
\usepackage{epsfig}
\usepackage{epstopdf}
\usepackage{url}
\usepackage{multirow}
\usepackage[usenames, dvipsnames]{color}
\algnewcommand{\LeftComment}[1]{   \(\triangleright\)  #1}

\renewenvironment{proof}{{ \par \noindent \textit{Proof:}}}{}
\newcommand{\cmark}{\ding{51}}%
\newcommand{\xmark}{\ding{55}}%
\newcommand*\samethanks[1][\value{footnote}]{\footnotemark[#1]}

\usepackage{authblk}

\title{Accelerated Method for Stochastic Composition Optimization with Nonsmooth Regularization}

\begin{document}

\author[1]{Zhouyuan Huo\thanks{Equal contribution}\thanks{zhouyuan.huo@pitt.edu}}
\author[1]{Bin Gu  \samethanks[1] \thanks{jsgubin@gmail.com}}
\author[2]{Ji Liu\thanks{ji.liu.uwisc@gmail.com}}
\author[1]{Heng Huang\thanks{heng.huang@pitt.edu}}
\affil[1]{Department of Electrical and Computer Engineering,  University of Pittsburgh}
\affil[2]{Department of Computer Science,  University of Rochester}
\maketitle

\begin{abstract} 
   Stochastic composition optimization draws much  attention recently and has been successful in many emerging applications of machine learning, statistical analysis, and reinforcement learning. In this paper, we focus on the composition problem with nonsmooth regularization penalty. Previous works either have slow convergence rate, or do not provide complete convergence analysis for the general problem. In this paper, we tackle these two issues by proposing a new stochastic composition optimization method for composition problem with nonsmooth regularization penalty. In our method, we apply variance reduction technique to accelerate the speed of convergence.  To the best of our knowledge, our method admits the fastest convergence rate for stochastic composition optimization: for strongly convex composition problem, our algorithm is proved to admit linear convergence; for general composition problem, our algorithm significantly improves the state-of-the-art convergence rate from $O(T^{-1/2})$ to $O((n_1+n_2)^{{2}/{3}}T^{-1})$. Finally, we apply our proposed algorithm to portfolio management and policy evaluation in reinforcement learning. Experimental results verify our theoretical analysis.
\end{abstract}

\section{Introduction}
Stochastic composition optimization draws much  attention  recently and  has been successful in addressing many emerging applications of different areas, such as reinforcement learning \cite{dai2016learning,wang2016accelerating},  statistical learning \cite{wang2017stochastic} and risk management \cite{dentcheva2016statistical}.
The authors in \cite{wang2017stochastic,wang2016accelerating} proposed  composition problem, which is the composition of two expected-value functions:
\begin{eqnarray}\label{formulation0}
	\min_{x\in \mathbb{R}^N} \ \ \underbrace{\mathbb{E}_{i}F_i(\mathbb{E}_{j}G_j(x))}_{f(x)}+h(x),
\end{eqnarray}
where $ G_j(x): \mathbb{R}^N \mapsto \mathbb{R}^M$ are inner component functions, $F_i(y): \mathbb{R}^M \mapsto \mathbb{R}$ are outer component functions. The regularization penalty $h(x)$  is a closed convex function but not necessarily smooth.  In reality, we usually solve the finite-sum scenario for composition problem (\ref{formulation0}), and it can be represented as follows:
\begin{eqnarray}\label{formulation1}
	\min_{x\in \mathbb{R}^N} H(x) = \min_{x\in \mathbb{R}^N} \underbrace{\frac{1}{n_1} \sum_{i=1}^{n_1} F_i \left (  \frac{1}{n_2} \sum_{j=1}^{n_2} G_j(x) \right )}_{f(x)} + h(x),
\end{eqnarray}
where it is defined that $F(y)  = \frac{1}{n_1} \sum\limits_{i=1}^{n_1} F_i(y) $ and $G(x) =  \frac{1}{n_2} \sum\limits_{j=1}^{n_1} G_j(x)$. Throughout this paper, we mainly focus on the case that $F_i$ and $G_j$ are smooth.  However, we do not require that $F_i$ and $G_j$ have to be convex.

Minimizing the composition of expected-value functions (\ref{formulation0}) or finite-sum functions (\ref{formulation1}) is challenging.  Classical stochastic gradient method (SGD) and its variants are well suited for minimizing traditional finite-sum functions \cite{bottou2016optimization}.  However, they are not directly applicable to the composition problem. To apply SGD, we need to compute the unbiased sampling gradient $(\nabla G_j(x))^T \nabla F_i(G(x))$ of problem (\ref{formulation1}), which is time-consuming when $G(x)$ is unknown. Evaluating $ G(x)$  requires traversing all inner component functions, which is unacceptable to compute in each iteration if $n_2$ is a large number.

\begin{table*}[th]
	\center
	\caption{The table shows the comparisons of SCGD, Accelerated SCGD, ASC-PG, Compositional-SVRG-1, Compositional-SVRG-2, com-SVR-ADMM and our VRSC-PG. For fair comparison, we consider query complexity in the convergence rate. We define that one query of Sampling Oracle ($\mathcal{SO}$) has three cases: (1) Given $x \in R^N$ and $j \in \{1,2,...,n_2\}$, $\mathcal{SO}$ returns
		$G_j(x) \in \mathbb{R}^M$; (2) Given $x \in R^N$ and $j \in \{1,2,...,n_2\}$, $\mathcal{SO}$ returns $\nabla G_j(x) \in \mathbb{R}^{M \times N}$; (3) Given $y \in \mathbb{R}^M$  and $i \in \{1,2,...,n_1\}$, $\mathcal{SO}$ returns $\nabla F_i(y) \in \mathbb{R}^M$.  $T$ denotes the total number of iterations, $\kappa$ is condition number and $0<\rho<1$.  }
	\label{table:methods}
	\setlength{\tabcolsep}{3mm}
	\begin{tabular}{c|c|c|c}
		\hline
		\textbf{Algorithm}  & \textbf{$h(x) \neq 0$} &  \textbf{Strongly Convex}&  \textbf{General Problem}  \\ \hline
		SCGD \cite{wang2017stochastic}& \xmark
		&  $O(T^{-{2}/{3}})$ & $O(T^{-1/4})$  \\
		Accelerated SCGD  \cite{wang2017stochastic}  &\xmark & $O(T^{-{4}/{5}})$ &  $O(T^{-{2}/{7}})$    \\
		Compositional-SVRG-1 \cite{lian2016finite} & \xmark   & $O\left(\rho^{\frac{T}{n_1+n_2+\kappa^4}}\right)$ & -     \\
		Compositional-SVRG-2  \cite{lian2016finite} & \xmark   & $O(\rho^{\frac{T}{n_1+n_2+\kappa^3}})$ & -     \\
		\hline \hline
		ASC-PG  \cite{wang2016accelerating}  &\cmark & $O(T^{-{4}/{5}})$  &  $O(T^{-{4}/{9}})$   \\
		ASC-PG \textit(if $G_j(x)$ are linear ) \cite{wang2016accelerating}  &\cmark &  $O(T^{-1})$ &   $O(T^{-{1}/{2}})$   \\
		com-SVR-ADMM \cite{svradmm}  & \cmark   & $O\left(\rho^{\frac{T}{n_1+n_2+\kappa^4}}\right)$  \footnotemark & -     \\ \hline
		VRSC-PG (Our) & {\cmark} &  \textcolor{red}{$O\left(\rho^{\frac{T}{n_1+n_2+\kappa^3}}\right)$}  &{\small \textcolor{red}{$O((n_1+n_2)^{2/3}T^{-1})$}}   \\
		\hline
	\end{tabular}
\end{table*}
In \cite{wang2017stochastic}, the authors considered the problem (\ref{formulation0}) with $h(x)=0$ and proposed stochastic compositional gradient descent algorithm (SCGD) which is the first stochastic method for composition problem. In their paper, they proved that the  convergence rate of SCGD for strongly convex composition problem is $O(T^{-{2}/{3}})$, and for general problem is $O(T^{-1/4})$.
They also proposed accelerated SCGD by using Nesterov smoothing technique \cite{nesterov1983method} which is proved to admit faster convergence rate. SCGD has constant query complexity per iteration, however, their convergence rate is far worse than full gradient method because of the  noise induced by sampling gradients.  Recently, variance reduction technique \cite{johnson2013accelerating} was applied to accelerate the convergence of stochastic composition optimization. \ \cite{lian2016finite} first utilized the variance reduction technique and proposed two variance reduced stochastic compositional gradient descent methods (Compositional-SVRG-1 and Compositional-SVRG-2). Both methods are proved to admit linear convergence rate.
However, the methods proposed in \cite{wang2017stochastic} and \cite{lian2016finite} are not applicable to composition problem with nonsmooth regularization penalty.

Composition problem with nonsmooth regularization was then considered in  \cite{wang2016accelerating,svradmm}. In \cite{wang2016accelerating}, the authors proposed accelerated stochastic compositional proximal gradient algorithm (ASC-PG). They proved that the optimal convergence rate of ASC-PG for strongly convex problem and general problem is $O(T^{-1})$ and $O(T^{-1/2})$ respectively. However, ASC-PG suffers from slow convergence because of the noise of the sampling gradients. \cite{svradmm} proposed com-SVR-ADMM using variance reduction.  Although com-SVR-ADMM admits linear convergence for strongly convex composition problem, it is not optimal. Besides, they did not analyze the convergence for general (nonconvex) composition problem either. We review the convergence rate of stochastic composition optimization in Table \ref{table:methods}.

In this paper,  we propose variance reduced stochastic compositional proximal gradient method (VRSC-PG) for composition problem with nonsmooth regularization penalty. Applying the variance reduction technique to composition problem is nontrivial because the optimization procedure and convergence analysis are essentially different. We investigate the convergence rate of our method: for strongly convex problem,  we prove that VRSC-PG has linear convergence rate $O\left(\rho^{\frac{T}{n_1+n_2+\kappa^3}}\right)$, which is faster than com-SVR-ADMM; For general problem, sometimes nonconvex, VRSC-PG significantly improves the state-of-the-art convergence rate of ASC-PG from $O(T^{-1/2})$ to $O((n_1+n_2)^{{2}/{3}}T^{-1})$. To the best of our knowledge, our result is the new benchmark for stochastic composition optimization. We further evaluate our method by applying it to portfolio management and reinforcement learning. Experimental results verify our theoretical analysis.

\footnotetext{ In \cite{svradmm} , their result is $O(\rho^{\frac{T}{n_1+n_2+Am}})$. We prove that to get linear convergence, it must be satisfied that $A$ and $m$ are proportional to  $\kappa^2$, which is not included in their paper. Check Remark \ref{foradmm} in supplementary material.}

\section{Preliminary}
In this section, we briefly review stochastic composition optimization and proximal stochastic variance reduced gradient.
\subsection{Stochastic Composition Optimization}
The objective function of the stochastic composition optimization is the composition of expected-value (\ref{formulation0}) or finite-sum (\ref{formulation1}) functions, which is much more complicated than traditional finite-sum problem.
The full gradient of composition problem using chain rule is $\nabla f(x) = ( \nabla G(x) )^T \nabla  F(G(x))$. Given $x$, applying the classical stochastic gradient descent method in constant queries to compute the unbiased sampling gradient $ (\nabla G_j(x))^T \nabla F_i(G(x))$ is not available, when $G(x)$ is unknown yet. In problem (\ref{formulation1}), evaluating $G(x)$ is time-consuming which requires $n_2$ queries in each iteration. Therefore, classical SGD is not applicable to  composition optimization. In \cite{wang2017stochastic}, the authors proposed the first stochastic compositional gradient descent (SCGD) for minimizing the stochastic composition problem (\ref{formulation0}) with $h(x) = 0$.   In their paper, they proposed to use an auxiliary variable $y$ to approximate $G(x)$. In each iteration $t$, we store $x_t$ and $y_t$ in memory.  SCGD are briefly described in Algorithm \ref{scgd}.

\begin{algorithm}[h]
	\caption{SCGD}
	\begin{algorithmic}[1]
		\State Initialize $x_0 \in \mathbb{R}^N$, $y_0 \in \mathbb{R}^M$;
		\For{$t=0,1,2,\dots, T-1$}
		\State  Uniformly sample $j$ from $\{1,2,...,n_2\}$  with replacement and query $G_j(x_t)$ and $\nabla_j G(x_t)$;  \Comment {\textcolor{red}{2 queries}}
		\State Update $y_{t+1}$ using:
		\begin{eqnarray}
			y_{t+1} \leftarrow (1-\beta_t) y_t + \beta_t G_j(x_t);
		\end{eqnarray}
		\State Uniformly sample  $i$ from $\{1,2,...,n_1\}$ with replacement and query $\nabla F_i(y_{t+1})$; \Comment{\textcolor{red}{1 query}}
		\State Update $x_{t+1}$ using:
		\begin{eqnarray}
			x_{t+1} \leftarrow x_t - \alpha_t (\nabla G_j(x_t))^T \nabla F_i(y_{t+1}) ;
		\end{eqnarray}
		\EndFor
	\end{algorithmic}
	\label{scgd}
\end{algorithm}

In the algorithm, $\alpha_t$ and $\beta_t$ are learning rate. Both of them are decreasing to guarantee convergence because of the noise induced by sampling gradients. In their paper, they supposed that $x \in \mathcal{X}$. In each iteration, $x$  is projected to $\mathcal{X}$ after step $4$. Furthermore, the authors proposed Accelerated SCGD by applying Nesterov smoothing \cite{nesterov1983method}, which is proved to converge faster than basic SCGD.

\subsection{Proximal Stochastic Variance Reduced Gradient}
Stochastic variance reduced gradient (SVRG) \cite{johnson2013accelerating} was proposed to minimize finite-sum functions:
\begin{eqnarray}
	\min\limits_{x \in \mathbb{R}^N}	\frac{1}{n_1} \sum\limits_{i=1}^{n_1} f_i(x),
	\label{finite}
\end{eqnarray}
where component functions $f_i(x) : \mathbb{R}^N \rightarrow  \mathbb{R}$. In large-scale optimization, SGD and its variants use unbiased sampling gradient $\nabla f_i(x)$ as the approximation of the full gradient, which only requires one query in each iteration. However, the variance induced by sampling gradients forces us to decease learning rate to make the algorithm converge. Suppose $x^*$ is the optimal solution to problem (\ref{finite}), full  gradient $ \frac{1}{n_1}\sum\limits_{i=1}^{n_1} \nabla f_i(x^*) = 0$, while sampling gradient $\nabla f_i(x^*) \neq 0$.  We should decease learning rate,  otherwise the convergence of the objective function value can not be guaranteed.  However, the decreasing learning rate makes SGD converge very slow at the same time. For example, if problem (\ref{finite}) is strongly convex, gradient descent method (GD) converges with linear rate, while SGD converges with a learning rate at $O(T^{-1})$. Reducing the variance is one of the most important ways to accelerate SGD, and it has been widely applied to large-scale optimization \cite{bottou2016optimization,gu2016zeroth,huo2017asynchronous}. In \cite{johnson2013accelerating}, the authors proposed to use the aggregation of old sampling gradients to regulate the current sampling gradient. In \cite{xiao2014proximal}, the authors considered the nonsmooth regularization penalty $h(x) \neq 0$ and proposed proximal stochastic variance reduced gradient (Proximal SVRG).    Proximal SVRG is briefly described in Algorithm \ref{svrg}. In their paper, they used $v_t$ as the approximation of full gradient, where $\mathbb{E} v_t = 0$. It was also proved that the variance of $v_t$ converges to zero: $\lim\limits_{t \rightarrow \infty} \mathbb{E} \|v_t - \frac{1}{n_1} \sum\limits_{i=1}^{n_1} \nabla f_i({x_t}) \|_2^2 \rightarrow 0$.  Therefore, we can keep learning rate $\eta$ constant in the procedure. In step 7, $\text{Prox}_{\eta h(.)}(x)$ denotes proximal operator. With the definition of proximal mapping, we have:
\begin{eqnarray}
	\text{Prox}_{\eta h(.)} (x) =  \arg \min\limits_{x'} ( h(x') + \frac{1}{\eta} \|x' - x\|^2),
\end{eqnarray}
Convergence analysis and experimental results confirmed that Proximal SVRG admits linear convergence in expectation for strongly convex optimization.  In \cite{reddi2016proximal},  the authors proved that Proximal SVRG has sublinear convergence rate of $O(n_1^{2/3}T^{-1})$ when $f_i(x)$ is nonconvex.

\begin{algorithm}[h]
	\renewcommand{\algorithmicrequire}{\textbf{Input:}}
	\renewcommand{\algorithmicensure}{\textbf{Output:}}
	\caption{Proximal SVRG}
	\begin{algorithmic}[1]
		\State Initialize $\tilde{x}^0 \in \mathbb{R}^N$;
		\For{$s=0,1,2,\dots S-1$}
		\State $x_0^{s+1} \leftarrow \tilde{x}^s $;
		\State $f' \leftarrow \frac{1}{n_1} \sum\limits_{i=1}^{n_1} \nabla f_i(\tilde{x}^s)  $; \Comment {\textcolor{red}{$n_1$ queries }}
		\For{$t=0,1,2,\dots, m-1$}
		\State  Uniformly sample $i$ from $\{1,2,...,n_1\}$ with replacement and query $\nabla f_i(x_{t}^{s+1})$ and $\nabla f_i(\tilde{x}^{s}) $;\Comment {\textcolor{red}{$2$ queries }}
		\State Update $v_t^{s+1}$ using:
		\begin{eqnarray}
			v_t^{s+1} \leftarrow \nabla f_i(x_{t}^{s+1}) - \nabla f_i(\tilde{x}^{s}) + f';
		\end{eqnarray}
		\State Update model $x_{t+1}^{s+1}$ using:
		\begin{eqnarray}
			x_{t+1}^{s+1} \leftarrow \text{Prox}_{\eta h(.)}( x_t^{s+1} - \eta v_t^{s+1});
		\end{eqnarray}
		\EndFor
		\State $\tilde x^{s+1} \leftarrow x_{m}^{s+1}$;
		\EndFor
	\end{algorithmic}
	\label{svrg}
\end{algorithm}

\section{Variance Reduced Stochastic Compositional Proximal Gradient}
In this section, we propose variance reduced stochastic compositional proximal gradient method (VRSC-PG) for solving the finite-sum composition problem with nonsmooth regularization penalty (\ref{formulation1}).

The description of VRSC-PG  is presented in  Algorithm \ref{algorithmSVRG2}. Similar to the framework of Proximal SVRG \cite{xiao2014proximal},  our VRSC-PG also has two-layer loops. At the beginning of the outer loop $s$, we keep a snapshot of the current model $\tilde{x}^{s}$ in memory and  compute the full  gradient:
\begin{eqnarray}
	\nabla f(\tilde x^s) =  \frac{1}{n_2} \sum_{j=1}^{n_2} (\nabla G_j(\tilde x^s) )^T {\frac{1}{n_1} \sum_{i=1}^{n_1} \nabla F_i \left ( {G}^s \right )},
	\label{alg_eq1}
\end{eqnarray}
where ${G}^s =  \frac{1}{n_2} \sum_{j=1}^{n_2} G_j(\tilde{x}^s)$  denotes the value of the inner functions and $ \nabla  G(\tilde{x}^{s})=\frac{1}{n_2} \sum_{j=1}^{n_2}\nabla G_j(\tilde{x}^s)$ denotes the gradient of inner functions.  Computing the full gradient of $f(x)$ in problem (\ref{formulation1}) requires $(n_1 + 2n_2)$ queries.

To make the number of queries in each inner iteration irrelevant to $n_2$, we need to keep $\widehat{G}_t^{s+1}$ and $\nabla \widehat{G}_t^{s+1}$ in memory to work as the estimates  of $G(x_t^{s+1})$ and $\nabla G(x_t^{s+1})$ respectively. In our algorithm,  we query $G_{A_t}(x^{s+1}_t)$ and $G_{A_t}(\tilde{x}^s)$, then $\widehat{G}_t^{s+1}$ is evaluated as follows:
\begin{eqnarray}
	\widehat{G}_t^{s+1} = {G}^s -  \frac{1}{A}\sum_{1 \leq j \leq A} \left (G_{A_t[j]}(\tilde x^{s}) - G_{A_t[j]}(x^{s+1}_t) \right ),
	\label{alg3_2}
\end{eqnarray}
where $A_t[{j}]$ denotes element $j$ in the set $A_t$ and $|A_t|=A$. The elements of $A_t$ are uniformly sampled from $\{1,2,...,n_2\}$ with replacement.  In (\ref{alg3_2}), we reduce the variance of $G_{A_t}(x_t^{s+1})$ by using $G^s$ and $G_{A_t}(\tilde{x}^s)$ .
Similarly, we sample $B_t$ with size $B$ from $\{1,2, ...,n_2\}$ uniformly with replacement, and query $\nabla G_{B_t} (x_t^{s+1})$ and $\nabla G_{B_t} (\tilde{x}^s) $. The estimation of $\nabla G(x_t^{s+1})$ is evaluated as follows:
{\small
	\begin{eqnarray}
		\nabla \widehat{G}_t^{s+1} = \nabla  G(\tilde{x}^{s}) -  \frac{1}{B}\sum_{1 \leq j \leq B} \left (\nabla G_{B_t[j]}(\tilde x^{s}) - \nabla G_{B_t[j]}(x^{s+1}_t) \right )\label{alg3_3}
	\end{eqnarray}
}
where $B_t[{j}]$ denotes element $j$ in the set $B_t$ and $|B_t|=B$.  It is important to note that $A_t$ and $B_t$ are independent. Computing $	 \widehat{G}_t^{s+1} $ and $\nabla \widehat{G}_t^{s+1} $ requires $(2A+2B)$ queries in each inner iteration.

Now, we are able to compute the estimate of $\nabla f(x_t^{s+1})$ in inner iteration $t$ as follows:
\begin{eqnarray}\label{alg3_1}
	{v}_{t}^{s+1} &=& \frac{1}{b_1} \sum\limits_{i_t \in I_t}  \biggl(  \left ( \nabla \widehat{G}_t^{s+1} \right )^T \nabla  F_{i_t}( \widehat{G}_t^{s+1}) \nonumber \\
	&&  - \left ( \nabla  G(\tilde x^{s}) \right )^T \nabla  F_{i_t}({G}^s)  \biggr)+ \nabla  f(\tilde{x}^{s}),
\end{eqnarray}
where $I_t$ is a set of indexes uniformly sampled from $\{1,2,...,n_1 \}$ and $|I_t|=b_1$. As per (\ref{alg3_1}), we need to query $\nabla F_{I_t} (\hat G_t^{s+1})$ and $\nabla F_{I_t} (G^s)$, and it requires $2b_1$ queries. Finally, we update the model with proximal operator:
\begin{eqnarray}
	x^{s+1}_{t+1} =  \textrm{Prox}_{ \eta h(\cdot) }\left ( x^{s+1}_t - {\eta}
	{v}^{s+1}_{t}   \right ),
\end{eqnarray}
where $\eta$ is the learning rate.


\begin{algorithm}
	\renewcommand{\algorithmicrequire}{\textbf{Input:}}
	\renewcommand{\algorithmicensure}{\textbf{Output:}}
	\caption{VRSC-PG}
	\begin{algorithmic}[1]
		\Require The total number of iterations in the inner loop $m$, the total number of iterations in the outer loop $S$,  the size of the mini-batch sets $A$,$B$ and $b_1$, learning rate $\eta$.
		\State  Initialize  $ \tilde{x}^0 \in \mathbb{R}^N$;
		\For{$s=0,1,2,\cdots,S-1$}
		\State $x_0^{s+1} \leftarrow \tilde{x}^s$;
		\State ${G}^s \leftarrow  \frac{1}{n_2} \sum_{j=1}^{n_2} G_i(\tilde{x}^s)$; \Comment {\textcolor{red}{$n_2$ queries}}
		\State $ \nabla  G(\tilde{x}^{s}) \leftarrow \frac{1}{n_2} \sum_{j=1}^{n_2}\nabla G_j(\tilde{x}^s)$;\Comment {\textcolor{red}{$n_2$ queries}}
		\State Compute the full gradient $\nabla f(\tilde x^s)$ using (\ref{alg_eq1}) ; \Comment {\textcolor{red}{$n_1$ queries}}
		\For{$t=0,1,2,\cdots,m-1$}
		\State  Uniformly sample ${A_t}$ from $\{1,2, ...,n_2\}$ with replacement and $|A_t| = A$ ;
		\State Update $\widehat{G}_t^{s+1} $ using (\ref{alg3_2}) ; \Comment {\textcolor{red}{$2A$ queries}}
		\State  Uniformly sample $B_t$ from $\{1,2, ...,n_2\}$ with replacement and $|B_t|= B$;
		\State Update  $\nabla \widehat{G}_t^{s+1} $ using (\ref{alg3_3}); \Comment {\textcolor{red}{$2B$ queries}}
		\State Uniformly sample $I_t$  from $\{1,2, ...,n_1\}$ with replacement;
		
		\State Compute $ {v}_{t}^{s+1} $ using (\ref{alg3_1}):
		\Comment {\textcolor{red}{$2b_1$ queries}}
		\State Update  model $x^{s+1}_{t+1}$ using:
		\begin{eqnarray}
			x^{s+1}_{t+1} \leftarrow \textrm{Prox}_{ \eta h(\cdot) }\left ( x^{s+1}_t - {\eta}
			{v}^{s+1}_{t}   \right )
		\end{eqnarray}
		\EndFor
		\State $\tilde{x}^{s+1}\leftarrow x_{m}^{s+1}$;
		\EndFor
	\end{algorithmic}
	\label{algorithmSVRG2}
\end{algorithm}

\section{Convergence Analysis}

In this section, we prove that (1) VRSC-PG admits linear convergence rate for the strongly convex problem; (2) VRSC-PG admits sublinear convergence rate $O((n_1+n_2)^{{2}/{3}}T^{-1})$ for the general problem.  To the best of our knowledge, both of them are the best results so far.  Following  are the assumptions commonly used for stochastic composition optimization \cite{wang2017stochastic,wang2016accelerating,lian2016finite}.

\noindent \textbf{Strongly  convex:} \ \ To analyze the convergence of VRSC-PG for the strongly convex composition problem, we assume that the function $ f$  is $\mu$-strongly convex.
\begin{assumption}\label{assumption2} The  function $f(x)$ is $\mu$-strongly convex. Therefore  $\forall x$ and $\forall y$, we have:
	\begin{eqnarray}\label{ass1_1}
		\| \nabla f(x) - \nabla f(y) \| &\geq &\mu  \|x - y \|.
	\end{eqnarray}
	Equivalently, $\mu$-strongly convexity  can  also be written as follows:
	\begin{eqnarray} \label{strongly_convex2}
		f(x)& \geq& f(y) + \langle \nabla f(y) , x-y \rangle + \frac{\mu}{2}  \left  \| x-y \right \|^2.
	\end{eqnarray}
\end{assumption}

\noindent \textbf{Lipschitz Gradient:} \ \  We assume that there exist Lipschitz constants $L_F$, $L_G$ and $L_f$ for $\nabla F_i(x)$, $\nabla G_j(x)$ and $\nabla f(x)$ respectively.
\begin{assumption}
	\label{definition1}
	There exist constants $L_F$, $L_G$ and $L_f$ for $\nabla F_i(x)$, $\nabla G_j(x)$ and $\nabla f(x)$ satisfying that $\forall x$, $\forall y$, $\forall i \in \{1,\cdots,n_1 \}$, $\forall j \in \{1,\cdots,n_2 \}$:
	\begin{eqnarray}\label{ass3_1}
		\| \nabla F_i(x) - \nabla F_i(y) \|  &\leq &L_F \|x - y \|,\\
		\label{ass3_2} \| \nabla G_j(x) - \nabla G_j(y) \|  &\leq& L_G \|x - y \|, \\
		\label{ass3_3} \|\left ( \nabla  G_{j}(x) \right )^T \nabla  F_{i}( {G}(x)) &-& \left ( \nabla  G_{j}(y) \right )^T \nabla  F_{i}( {G}(y)) \|  \nonumber \\
		&\leq&  L_f \|x - y \|.
	\end{eqnarray}
	As proved  in \cite{lian2016finite}, according to (\ref{ass3_3}), we have:
	\begin{eqnarray}\label{ass3_5}
		\|\nabla f(x) - \nabla f(y) \| & \leq & L_f \left \| x-y \right \|, \forall x, \forall y.
	\end{eqnarray}
	Equivalently, (\ref{ass3_5})  can  also be written as follows:
	\begin{eqnarray} \label{ass3_4}
		f(x) \leq f(y) + \langle \nabla f(y) , x-y \rangle + \frac{L_f}{2}  \left \| x-y \right \|^2, \forall x, \forall y
	\end{eqnarray}
\end{assumption}

\noindent \textbf{Bounded gradients:} We assume that the gradients $\nabla  F_{i}(x)$ and $\nabla  G_{j}(x)$ are upper bounded.
\begin{assumption}\label{ass_bounded gradients}
	The gradients $\nabla  F_{i}(x)$ and $\nabla  G_{j}(x)$ have upper bounds $B_F$ and $B_G$ respectively.
	\begin{eqnarray}\label{ass4_1}
		\left \| \nabla  F_{i}(x)\right \| &\leq& B_F, \forall x, \forall i \in \{1,\cdots,n_1 \}
		\\ \label{ass4_2} \left \| \nabla  G_{j}(x)\right \| &\leq& B_G, \forall x, \forall j \in \{1,\cdots,n_2 \}
	\end{eqnarray}
\end{assumption}

Note that we do not need the strong convexity assumption when we analyze the convergence of VRSC-PG for the general problem.

\subsection{Strongly Convex Problem}
In this section, we prove that our VRSC-PG admits linear convergence rate for strongly convex finite-sum composition problem with nonsmooth penalty regularization (\ref{formulation1}). We need Assumptions 1, 2 and 3 in this section. Unlike Prox-SVRG in \cite{xiao2014proximal}, the estimated $ {v}_{t}^{s+1} $ is biased, i.e., $ \mathbb{E}_{I_t,A_t,B_t}[{v}_{t}^{s+1}] \neq \nabla f(x^{s+1}_t)$.
It makes the theoretical analysis for proving the convergence rate of VRSC-PG  more challenging than the analysis in \cite{xiao2014proximal}. In spite of this, we can demonstrate that $\mathbb{E}\|v_t^{s+1} - \nabla f(x_t^{s+1})\|^2$ is upper bounded as well.
\begin{lemma}\label{lem1001}
	Let $x^*$ be the optimal solution to problem (\ref{formulation1}) $H(x)$ such that $x^* = \arg\min_{x\in \mathbb{R}^N} H(x) $. We define $\gamma =  \biggl( \frac{64}{\mu}\left ( \frac{B_F^2L_G^2}{B} + \frac{B_G^4L_F^2}{A} \right ) + 8L_f  \biggr) $.  Supposing Assumptions \ref{assumption2}, \ref{definition1} and \ref{ass_bounded gradients} hold,  from the definition of $v_t^{s+1}$ in (\ref{alg3_1}), the following inequality holds that:
	\small	\begin{eqnarray}
		\mathbb{E}\|v_t^{s+1} - \nabla f(x_t^{s+1})\|^2  \leq  \gamma \biggl[H(x_{t}^{s+1}) - H(x^*)  + H(\tilde{x}^{s} ) - H(x^*) \biggr]
	\end{eqnarray}
\end{lemma}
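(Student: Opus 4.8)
The plan is to decompose the error $v_t^{s+1} - \nabla f(x_t^{s+1})$ into two contributions: one coming from the fact that $\widehat{G}_t^{s+1}$ and $\nabla \widehat{G}_t^{s+1}$ are only estimates of the true inner value $G(x_t^{s+1})$ and inner Jacobian $\nabla G(x_t^{s+1})$, and one coming from the mini-batch SVRG-style sampling over the outer index set $I_t$. First I would write $v_t^{s+1}$ as $v_t^{s+1} = u_t^{s+1} + r_t^{s+1}$, where $u_t^{s+1}$ is the hypothetical SVRG estimator built with the \emph{exact} inner quantities $G(x_t^{s+1})$, $\nabla G(x_t^{s+1})$ (so $\mathbb{E}_{I_t} u_t^{s+1} = \nabla f(x_t^{s+1})$), and $r_t^{s+1}$ collects the perturbation caused by replacing $G(x_t^{s+1}) \mapsto \widehat{G}_t^{s+1}$ and $\nabla G(x_t^{s+1}) \mapsto \nabla \widehat{G}_t^{s+1}$. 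Then $\mathbb{E}\|v_t^{s+1} - \nabla f(x_t^{s+1})\|^2 \leq 2\mathbb{E}\|u_t^{s+1} - \nabla f(x_t^{s+1})\|^2 + 2\mathbb{E}\|r_t^{s+1}\|^2$, and I would bound each piece separately.

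For the SVRG term $\mathbb{E}\|u_t^{s+1} - \nabla f(x_t^{s+1})\|^2$, I would use the standard variance-reduction argument: the summand inside $u_t^{s+1}$ is $(\nabla G(x_t^{s+1}))^T \nabla F_{i_t}(G(x_t^{s+1})) - (\nabla G(\tilde x^s))^T \nabla F_{i_t}(G^s)$, and by Assumption~\ref{definition1} (specifically \eqref{ass3_3}) each such difference has norm at most $L_f \|x_t^{s+1} - \tilde x^s\|$; averaging over $I_t$ and using $\mathrm{Var} \leq \mathbb{E}\|\cdot\|^2$ gives a bound of order $L_f^2 \|x_t^{s+1} - \tilde x^s\|^2$, and then the triangle inequality $\|x_t^{s+1} - \tilde x^s\| \leq \|x_t^{s+1} - x^*\| + \|\tilde x^s - x^*\|$ converts this into $\|x_t^{s+1}-x^*\|^2 + \|\tilde x^s - x^*\|^2$ terms. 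For the perturbation term $\mathbb{E}\|r_t^{s+1}\|^2$, I would expand $r_t^{s+1}$ as a sum of terms of the form $(\nabla \widehat G_t^{s+1} - \nabla G(x_t^{s+1}))^T \nabla F_{i_t}(\cdot)$ and $(\nabla G(x_t^{s+1}))^T(\nabla F_{i_t}(\widehat G_t^{s+1}) - \nabla F_{i_t}(G(x_t^{s+1})))$; bound the first using $\|\nabla F_{i_t}\| \leq B_F$ and the second using $L_F$-Lipschitzness of $\nabla F_{i_t}$ together with $\|\nabla G(x_t^{s+1})\| \leq B_G$. This reduces everything to controlling $\mathbb{E}\|\nabla \widehat G_t^{s+1} - \nabla G(x_t^{s+1})\|^2$ and $\mathbb{E}\|\widehat G_t^{s+1} - G(x_t^{s+1})\|^2$, which are themselves variance-reduced mini-batch estimators over $B_t$ and $A_t$; by the same $\mathrm{Var}\leq\mathbb{E}\|\cdot\|^2$ argument they are bounded by $\tfrac{L_G^2}{B}\|x_t^{s+1}-\tilde x^s\|^2$ and $\tfrac{B_G^2 L_G^2}{A}\|x_t^{s+1}-\tilde x^s\|^2$ respectively (using Assumption~\ref{ass_bounded gradients} and \eqref{ass3_2}), again split about $x^*$. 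Collecting the constants produces exactly the combination $\tfrac{B_F^2 L_G^2}{B} + \tfrac{B_G^4 L_F^2}{A}$ appearing in $\gamma$.

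The final step is to convert the distance-to-optimum bounds $\|x_t^{s+1}-x^*\|^2$ and $\|\tilde x^s - x^*\|^2$ into the objective-gap bounds $H(x_t^{s+1}) - H(x^*)$ and $H(\tilde x^s) - H(x^*)$ claimed in the statement; this is where $\mu$-strong convexity (Assumption~\ref{assumption2}) enters, via the inequality $\tfrac{\mu}{2}\|x - x^*\|^2 \leq H(x) - H(x^*)$ (valid since $x^*$ minimizes $H$ and $h$ is convex, so $H$ is $\mu$-strongly convex), which accounts for the $\tfrac{1}{\mu}$ factor and the constant $64$ in the first part of $\gamma$, while the residual $8 L_f$ term comes from the leading SVRG contribution. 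I expect the main obstacle to be bookkeeping: keeping the numerous cross-terms from the repeated use of $\|a+b\|^2 \leq 2\|a\|^2 + 2\|b\|^2$ organized so that the constants line up precisely with the stated $\gamma$, and making sure the independence of $A_t$, $B_t$, $I_t$ is used correctly when taking the iterated expectation (so that conditioning on $x_t^{s+1}$ and $\tilde x^s$ is legitimate at each stage). The mathematical content of each individual estimate is routine given the assumptions; the care is entirely in the constants.
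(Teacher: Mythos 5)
Your overall architecture matches the paper's proof: you split $v_t^{s+1}$ into the ``exact-inner'' SVRG estimator $u_t^{s+1}$ plus a perturbation, bound $\mathbb{E}\|v_t^{s+1}-\nabla f(x_t^{s+1})\|^2 \le 2\mathbb{E}\|u_t^{s+1}-\nabla f(x_t^{s+1})\|^2+2\mathbb{E}\|u_t^{s+1}-v_t^{s+1}\|^2$, and your treatment of the perturbation term (bounded gradients $B_F,B_G$, Lipschitz constants $L_F,L_G$, mini-batch variance over $A_t,B_t$, then strong convexity to pass from $\|x-x^*\|^2$ to $H(x)-H(x^*)$) is exactly the paper's route to the $\frac{64}{\mu}\bigl(\frac{B_F^2L_G^2}{B}+\frac{B_G^4L_F^2}{A}\bigr)$ part of $\gamma$ (modulo a small slip: the inner-value estimator's variance is controlled by $B_G^2/A$, not $B_G^2L_G^2/A$, since $B_G$ is the Lipschitz constant of $G_j$ itself).

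The genuine gap is in the SVRG term. You bound $\mathbb{E}\|u_t^{s+1}-\nabla f(x_t^{s+1})\|^2$ by applying the Lipschitz property (\ref{ass3_3}) to get $L_f^2\|x_t^{s+1}-\tilde x^s\|^2$, splitting about $x^*$, and then invoking strong convexity; this yields a contribution of order $\frac{L_f^2}{\mu}\bigl[H(x_t^{s+1})-H(x^*)+H(\tilde x^s)-H(x^*)\bigr]$, i.e.\ roughly $8L_f\kappa$ rather than the stated $8L_f$, so your route proves a strictly weaker lemma and cannot ``line up'' with the stated $\gamma$ no matter how the bookkeeping is done. The paper instead uses the classical SVRG co-coercivity device: it introduces $\phi_i(x)=F_i(G(x))-F_i(G(x^*))-\langle(\nabla G(x^*))^T\nabla F_i(G(x^*)),x-x^*\rangle$, shows $\nabla\phi_i$ is $L_f$-Lipschitz, and applies Theorem 2.1.5 of Nesterov (together with the optimality condition $\nabla f(x^*)+\xi^*=0$, $\xi^*\in\partial h(x^*)$, and convexity of $h$) to get $\frac{1}{n_1}\sum_i\|(\nabla G(x))^T\nabla F_i(G(x))-(\nabla G(x^*))^T\nabla F_i(G(x^*))\|^2\le 2L_f[H(x)-H(x^*)]$, which is linear in $L_f$ and needs no $1/\mu$ factor for this piece. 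This is not cosmetic: the $8L_f$ (as opposed to $L_f^2/\mu$) is what lets Corollary \ref{AsySPSAGA_theorem2} take $\eta\propto 1/L_f$ and $m\propto\kappa$, giving the claimed $O\bigl((n_1+n_2+\kappa^3)\log(1/\varepsilon)\bigr)$ complexity; with your bound one is forced to $\eta\propto\mu/L_f^2$ and $m\propto\kappa^2$, which degrades the result to the $\kappa^4$ regime of com-SVR-ADMM. So the missing idea is precisely the $\phi_i$ co-coercivity argument relating the sampled gradient difference at $x$ versus $x^*$ directly to the objective gap.
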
	
Therefore, when $x_t^{s+1}$ and $\tilde{x}^s$ converges to $x^*$, 	$\mathbb{E}\|v_t - \nabla f(x_t^{s+1})\|^2 $ also converges to zero. Thus, we can keep learning rate constant, and obtain faster convergence.

\begin{theorem} \label{AsySPSAGA_theorem1}
	Suppose Assumptions \ref{assumption2}, \ref{definition1} and \ref{ass_bounded gradients} hold. We let $x^* = \arg\min_{x\in \mathbb{R}^N} H(x)$, if $m$, $A$, $B$ and  $\eta$ are selected properly so that $\rho < 1$, where $\rho$ is defined as follows:
	{\small
		\begin{eqnarray}
			\rho =  \frac{  \frac{2 }{\mu} + 2 \eta  \left ( 6 \eta L_f + \left ( \frac{\eta}{2} +  \frac{4 }{\mu} \right ) \frac{32 }{\mu}\left ( \frac{B_F^2L_G^2}{B} + \frac{B_G^4L_F^2}{A} \right )  \right ) (m+1) }{2 \eta \left (\frac{7}{8} -\left ( 6 \eta L_f + \left ( \frac{\eta}{2} +  \frac{4 }{\mu} \right ) \frac{32 }{\mu}\left ( \frac{B_F^2L_G^2}{B} + \frac{B_G^4L_F^2}{A} \right ) \right ) \right )m}
		\end{eqnarray}
	}
	we can prove that our VRSC-PG admits linear convergence rate:
	\begin{eqnarray}
		\mathbb{E}H(\tilde{x}^{S}) -H(x^*)
		& \leq&   \rho^S \biggl( \mathbb{E} H(\tilde{x}^{0}) -H({x}^*) \biggr)
	\end{eqnarray}
\end{theorem}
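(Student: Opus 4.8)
The plan is to follow the standard two-level SVRG template adapted to the composition setting: first establish a one-step descent inequality for the proximal update at inner iteration $t$, then sum over $t=0,\dots,m-1$ within an epoch, use Lemma \ref{lem1001} to control the variance term $\mathbb{E}\|v_t^{s+1}-\nabla f(x_t^{s+1})\|^2$ by the suboptimality gaps $H(x_t^{s+1})-H(x^*)$ and $H(\tilde x^s)-H(x^*)$, and finally telescope across epochs to obtain the geometric rate $\rho^S$. Concretely, the first step is to exploit the optimality condition of the proximal map $x_{t+1}^{s+1}=\mathrm{Prox}_{\eta h(\cdot)}(x_t^{s+1}-\eta v_t^{s+1})$: there exists a subgradient $\xi\in\partial h(x_{t+1}^{s+1})$ with $x_{t+1}^{s+1}=x_t^{s+1}-\eta(v_t^{s+1}+\xi)$. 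Combining this with the $L_f$-smoothness of $f$ (inequality (\ref{ass3_4})), convexity of $h$, and $\mu$-strong convexity of $f$ (inequality (\ref{strongly_convex2})), one derives an inequality of the form $H(x_{t+1}^{s+1})\le H(x^*)+\frac{1}{2\eta}\big(\|x_t^{s+1}-x^*\|^2-\|x_{t+1}^{s+1}-x^*\|^2\big)+(\text{terms in }\eta L_f)\cdot(\text{gap})+\langle \nabla f(x_t^{s+1})-v_t^{s+1},\, x_{t+1}^{s+1}-x^*\rangle$, where the last inner product is handled by Young's inequality, splitting off an $\mathbb{E}\|v_t^{s+1}-\nabla f(x_t^{s+1})\|^2$ piece (multiplied by a factor controlled by $\eta$ and $1/\mu$) and a piece absorbed into the distance terms.

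The second step is to take expectations and invoke Lemma \ref{lem1001}, which replaces $\mathbb{E}\|v_t^{s+1}-\nabla f(x_t^{s+1})\|^2$ by $\gamma\big[(H(x_t^{s+1})-H(x^*))+(H(\tilde x^s)-H(x^*))\big]$. This is where the quantities $6\eta L_f$ and $(\tfrac{\eta}{2}+\tfrac{4}{\mu})\tfrac{32}{\mu}(\tfrac{B_F^2L_G^2}{B}+\tfrac{B_G^4L_F^2}{A})$ in the definition of $\rho$ emerge: the coefficient multiplying the current gap $H(x_t^{s+1})-H(x^*)$ must stay below $\tfrac{7}{8}$ (hence the $\tfrac{7}{8}-(\cdots)$ in the denominator), and the same bracket multiplies the snapshot gap $H(\tilde x^s)-H(x^*)$, contributing the $(m+1)$ factor after summation. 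Summing the per-step inequality over $t=0,\dots,m-1$, the distance terms telescope so that $\sum_t \frac{1}{2\eta}(\|x_t^{s+1}-x^*\|^2-\|x_{t+1}^{s+1}-x^*\|^2)=\frac{1}{2\eta}(\|x_0^{s+1}-x^*\|^2-\|x_m^{s+1}-x^*\|^2)\le\frac{1}{2\eta}\|\tilde x^s-x^*\|^2$, and then $\mu$-strong convexity of $f$ together with optimality of $x^*$ converts $\|\tilde x^s-x^*\|^2$ into $\frac{2}{\mu}(H(\tilde x^s)-H(x^*))$, producing the $\frac{2}{\mu}$ term in the numerator of $\rho$. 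Bounding $\sum_{t=0}^{m-1}(H(x_{t+1}^{s+1})-H(x^*))$ from below by $m\cdot(\text{something})$ is not available directly, so instead one keeps the sum and rearranges to isolate $\mathbb{E}[H(\tilde x^{s+1})-H(x^*)]$ (recall $\tilde x^{s+1}=x_m^{s+1}$) or, more cleanly, uses that the left side dominates $m$ copies of the minimum—but the paper's $\rho$ suggests they instead lower-bound $\sum_{t=0}^{m-1} 2\eta(\tfrac{7}{8}-(\cdots))(H(x_{t+1}^{s+1})-H(x^*))$ by the same quantity with $x_{t+1}^{s+1}$ replaced appropriately and then divide, giving $\mathbb{E}[H(\tilde x^{s+1})-H(x^*)]\le\rho\,\mathbb{E}[H(\tilde x^s)-H(x^*)]$.

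The third and final step is the trivial induction on $s$: iterating the one-epoch contraction $S$ times yields $\mathbb{E}H(\tilde x^S)-H(x^*)\le\rho^S(\mathbb{E}H(\tilde x^0)-H(x^*))$, which is the claim, and the hypothesis that $m,A,B,\eta$ are chosen so that $\rho<1$ guarantees genuine linear convergence. I expect the main obstacle to be the first step—carefully extracting the descent inequality from the proximal optimality condition while tracking the bias of $v_t^{s+1}$. Unlike plain Prox-SVRG, here $\mathbb{E}[v_t^{s+1}]\ne\nabla f(x_t^{s+1})$, so the cross term $\langle\nabla f(x_t^{s+1})-v_t^{s+1},\cdot\rangle$ does not vanish in expectation and must be controlled purely through the second-moment bound of Lemma \ref{lem1001}; getting the constants to line up so that the current-gap coefficient is strictly below $\tfrac{7}{8}$ (and hence $\rho$ can be driven below $1$ by taking $A,B$ large and $\eta$ small and then $m$ moderate) is the delicate bookkeeping that makes or breaks the argument.
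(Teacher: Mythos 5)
Your overall skeleton (per-step proximal inequality, summation over the epoch, conversion of $\|\tilde x^s-x^*\|^2$ into a gap by strong convexity, then iteration over epochs) matches the paper, but the step you yourself single out as the crux --- handling the cross term $\mathbb{E}\langle\nabla f(x_t^{s+1})-v_t^{s+1},\,x_{t+1}^{s+1}-x^*\rangle$ ``purely through the second-moment bound of Lemma \ref{lem1001}'' via Young's inequality --- does not work, and this is a genuine gap. Quantitatively: any Young split gives $2\eta\,\mathbb{E}\langle\nabla f(x_t^{s+1})-v_t^{s+1},x_{t+1}^{s+1}-x^*\rangle\le \eta c\,\mathbb{E}\|v_t^{s+1}-\nabla f(x_t^{s+1})\|^2+\tfrac{\eta}{c}\,\mathbb{E}\|x_{t+1}^{s+1}-x^*\|^2$; to absorb the second piece (via $\|x-x^*\|^2\le\tfrac{2}{\mu}(H(x)-H(x^*))$) against the available $-2\eta\,(\mathbb{E}H(x_{t+1}^{s+1})-H(x^*))$ you are forced to take $c\gtrsim 1/\mu$, and then Lemma \ref{lem1001} turns the first piece into a coefficient of order $\eta\gamma/\mu$ on the gaps, with $\gamma=\frac{64}{\mu}\left(\frac{B_F^2L_G^2}{B}+\frac{B_G^4L_F^2}{A}\right)+8L_f\ge 8L_f$. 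Since the favorable coefficient on the current gap is only of order $\eta$, the contraction would require roughly $\gamma\lesssim\mu$, i.e.\ $L_f\lesssim\mu$, which is impossible; no choice of $\eta$, $m$, $A$, $B$ rescues this, because the $8L_f$ part of $\gamma$ is not divided by $A$ or $B$ and the factors of $\eta$ cancel on both sides. In particular, the $6\eta L_f$ term in the stated $\rho$ cannot be produced along your route.

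The missing idea is the paper's Lemma \ref{SVRG-FSC-2_lem2}: split $v_t^{s+1}-\nabla f(x_t^{s+1})$ into the unbiased part $u_t^{s+1}-\nabla f(x_t^{s+1})$ (where $u_t^{s+1}$ uses the exact $G(x_t^{s+1})$ and $\nabla G(x_t^{s+1})$) and the bias $v_t^{s+1}-u_t^{s+1}$, and pair the unbiased part with $\bar x_{t+1}^{s+1}=\mathrm{Prox}_{\eta h(\cdot)}\left(x_t^{s+1}-\eta\nabla f(x_t^{s+1})\right)$, which is deterministic given $x_t^{s+1}$, so that $\mathbb{E}\langle u_t^{s+1}-\nabla f(x_t^{s+1}),\,x^*-\bar x_{t+1}^{s+1}\rangle=0$. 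The leftover term $\mathbb{E}\langle u_t^{s+1}-\nabla f(x_t^{s+1}),\,\bar x_{t+1}^{s+1}-x_{t+1}^{s+1}\rangle$ is controlled by nonexpansiveness of the proximal map, $\|\bar x_{t+1}^{s+1}-x_{t+1}^{s+1}\|\le\eta\|\nabla f(x_t^{s+1})-v_t^{s+1}\|$, which supplies the extra factor of $\eta$ that turns the Lipschitz contribution into $6\eta L_f$ (harmless once $\eta\sim 1/L_f$); only the bias $v_t^{s+1}-u_t^{s+1}$, whose second moment scales as $\frac{B_F^2L_G^2}{B}+\frac{B_G^4L_F^2}{A}$, is treated with Young's inequality at parameter $\alpha=\mu/8$, yielding the $\left(\frac{\eta}{2}+\frac{4}{\mu}\right)\frac{32}{\mu}\left(\frac{B_F^2L_G^2}{B}+\frac{B_G^4L_F^2}{A}\right)$ term, which is then made small by taking $A,B\sim\kappa^2$. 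Without this finer decomposition the per-epoch factor $\rho<1$ in the theorem cannot be established, so your proposal needs this additional lemma (or an equivalent bias/variance splitting) at its core.
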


As per Theorem \ref{AsySPSAGA_theorem1},  we need to choose $\eta$, $m$, $A$ and $B$ properly to make $\rho < 1$. We provide an example to show how to select these parameters.

\begin{corollary} \label{AsySPSAGA_theorem2}
	According to Theorem \ref{AsySPSAGA_theorem1},  we  set $\eta$, $m$, $A$ and $B$ as follows:
	\begin{eqnarray}
		\eta &=& \frac{1}{96 L_f} \hspace{4cm} \\
		m &=&  16 \left ( 1 +  \frac{ 96 L_f}{\mu } \right ) \\
		A&=&  \frac{2048  B_G^4L_F^2}{\mu^2 }     \\
		B&=&  \frac{2048  B_F^2L_G^2}{ \mu^2 }
	\end{eqnarray}
	we have the following linear convergence rate for VRSC-PG:
	\begin{eqnarray}
		\mathbb{E}H(\tilde{x}^{S}) -H(x^*)
		\leq   \left(\frac{2}{3}\right)^S \bigg(\mathbb{E} H(\tilde{x}^{0}) -H({x}^*) \biggr)
	\end{eqnarray}
\end{corollary}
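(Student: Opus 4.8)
The plan is to treat this as a pure verification: substitute the prescribed $\eta$, $m$, $A$, $B$ into the formula for $\rho$ furnished by Theorem \ref{AsySPSAGA_theorem1}, simplify, and confirm the result is at most $\tfrac{2}{3}$; the stated bound then follows immediately by invoking Theorem \ref{AsySPSAGA_theorem1} with this value of $\rho$. To keep the bookkeeping under control I would first abbreviate the bracketed quantity that occurs in both the numerator and the denominator of $\rho$,
\[
\Psi \;:=\; 6\eta L_f + \left( \frac{\eta}{2} + \frac{4}{\mu} \right)\frac{32}{\mu}\left( \frac{B_F^2 L_G^2}{B} + \frac{B_G^4 L_F^2}{A} \right),
\]
so that $\rho = \big( \tfrac{2}{\mu} + 2\eta\Psi(m+1) \big)\big/\big( 2\eta(\tfrac{7}{8}-\Psi)m \big)$, and evaluate $\Psi$ first.

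Substituting $\eta = \tfrac{1}{96 L_f}$ gives $6\eta L_f = \tfrac{1}{16}$. The prescribed $A = \tfrac{2048 B_G^4 L_F^2}{\mu^2}$ and $B = \tfrac{2048 B_F^2 L_G^2}{\mu^2}$ are exactly the values that make $\tfrac{B_F^2 L_G^2}{B} + \tfrac{B_G^4 L_F^2}{A} = \tfrac{\mu^2}{1024}$, so the second term of $\Psi$ collapses to $\big( \tfrac{\eta}{2} + \tfrac{4}{\mu} \big)\tfrac{\mu}{32} = \tfrac{\eta\mu}{64} + \tfrac18 = \tfrac{\mu}{6144 L_f} + \tfrac18$, whence $\Psi = \tfrac{3}{16} + \tfrac{\mu}{6144 L_f}$. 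At this point I would invoke the standard fact that a $\mu$-strongly convex function with $L_f$-Lipschitz gradient obeys $\mu \le L_f$ (combine Assumption \ref{assumption2} with (\ref{ass3_5})), so $\Psi \le \tfrac{3}{16} + \tfrac{1}{6144} < \tfrac15 < \tfrac78$; in particular $\rho$ is well-defined and positive.

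Next I would bound numerator and denominator separately. Setting $c := \tfrac{96 L_f}{\mu} \ge 96$, the prescription for $m$ reads $m = 16(1+c)$, $m+1 = 17+16c$, and $\tfrac{1}{L_f} = \tfrac{96}{\mu c}$. Using $\Psi \le \tfrac15$ in the numerator together with $2\eta = \tfrac{1}{48 L_f}$,
\[
\frac{2}{\mu} + 2\eta\Psi(m+1) \;\le\; \frac{2}{\mu} + \frac{17+16c}{240\,L_f} \;=\; \frac{1}{\mu}\left( \frac{42}{5} + \frac{34}{5c} \right) \;\le\; \frac{8.5}{\mu},
\]
and using $\tfrac78 - \Psi \ge \tfrac78 - \tfrac15 = \tfrac{27}{40}$ in the denominator,
\[
2\eta\left( \tfrac78 - \Psi \right)m \;\ge\; \frac{27}{40}\cdot\frac{16(1+c)}{48\,L_f} \;=\; \frac{9(1+c)}{40\,L_f} \;=\; \frac{21.6\,(1+c)}{\mu c} \;\ge\; \frac{21.6}{\mu}.
\]
Dividing the two estimates gives $\rho \le \tfrac{8.5}{21.6} < \tfrac23$, and feeding $\rho \le \tfrac23$ into the conclusion of Theorem \ref{AsySPSAGA_theorem1} produces $\mathbb{E}H(\tilde x^{S}) - H(x^*) \le \big(\tfrac23\big)^S\big( \mathbb{E}H(\tilde x^{0}) - H(x^*) \big)$, which is the claim.

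There is no real obstacle here beyond keeping the constants straight; the single non-mechanical ingredient is the inequality $\mu \le L_f$, which is what makes the stray term $\tfrac{\mu}{6144 L_f}$ negligible and lets $m$ be controlled purely by the condition number. Two minor points deserve a remark: the prescribed $m$, $A$, $B$ need not be integers, but $\rho$ is decreasing in $m$ and decreasing in $A$ and $B$ (the latter through $\Psi$), so replacing them by their ceilings only shrinks $\rho$ and leaves the bound intact; and the generous slack — the exact ratio sits near $\tfrac{4}{11}$ rather than $\tfrac23$ — simply reflects that the constants in Theorem \ref{AsySPSAGA_theorem1} were chosen for readability rather than tightness.
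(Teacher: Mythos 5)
Your proposal is correct and follows essentially the same route as the paper's proof: substitute the prescribed $\eta$, $m$, $A$, $B$ into the expression for $\rho$ from Theorem \ref{AsySPSAGA_theorem1}, bound the bracketed quantity, and bound the numerator and denominator separately to conclude $\rho \le \tfrac{2}{3}$. Your forward verification is slightly tighter (you compute the bracketed term exactly, getting $\rho \lesssim 0.4$, whereas the paper bounds each of its five pieces by $\tfrac{1}{16}$ to reach exactly $\tfrac{5}{9}+\tfrac{1}{9}$), and your explicit use of $\mu \le L_f$ also cleanly accounts for why the single values of $A$ and $B$ in the corollary statement suffice in place of the maxima appearing at the end of the paper's proof.
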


\begin{remark}
	According to Theorem \ref{AsySPSAGA_theorem1}, in order to obtain $\mathbb{E} H(\tilde{x}^s) - H(x^*)  \leq \varepsilon$, the number of stages $S$ is required to satisfy:
	\begin{eqnarray}
		S &\geq &  \log \frac{\mathbb{E} H(\tilde{x}^0) - H(x^*)}{\varepsilon}  / \log \frac{1}{\rho}
	\end{eqnarray}
\end{remark}

As per Algorithm \ref{algorithmSVRG2}  and the definition of Sampling Oracle in \cite{wang2016accelerating},  to make the objective value gap $\mathbb{E}H(\tilde{x}^{s}) - H(x^*)  \leq \varepsilon$, the total query complexity we need to take is $O\biggl(\big(n_1 + n_2 + m(A+B + b_1)\big) \log (\frac{1}{\varepsilon})\biggr) = O\biggl((n_1 + n_2 + \kappa^3) \log (\frac{1}{\varepsilon})\biggr) $, where we let $\kappa = \max \biggl\{\frac{L_f}{\mu}, \frac{L_F}{\mu}, \frac{L_G}{\mu} \biggr\}$ and $b_1$ can be smaller than or proportional to $\kappa^2$.  It is better  than com-SVR-ADMM\cite{svradmm}  whose total query complexity is $O\biggl((n_1 + n_2 + \kappa^4) \log (\frac{1}{\varepsilon})\biggr) $.

\subsection{ General Problem}
In this section, we prove that VRSC-PG admits a  sublinear convergence rate $O(T^{-1})$ for the general finite-sum composition problem with nonsmooth regularization penalty.  It is much better than the state-of-the-art method ASC-PG \cite{wang2016accelerating} whose optimal convergence rate is $O(T^{-1/2})$. In this section, we only need Assumption 2 and 3.
The unbiased $ {v}_{t}^{s+1} $ makes our analysis nontrivial and it is much different from previous analysis for finite-sum problem \cite{reddi2016fast}. In our proof, we define
$
\mathcal{G}_{\eta}(x) = \frac{1}{\eta} \left(x - \text{Prox}_{\eta h(.)} (x - \nabla f(x)\right)
$.
\begin{theorem}\label{thm3}
	Suppose Assumptions \ref{definition1} and \ref{ass_bounded gradients} hold. Let $x^*$ be the optimal solution to problem (\ref{formulation1}), we have $x^* = \arg\min_{x\in \mathbb{R}^N} H(x)$. If $m$, $A$, $B$, $b_1$ and $\eta$ are selected properly such that:
	\begin{eqnarray}
		\label{thm3_cond}
		4 \biggl( \frac{\eta m^2 L_f^2}{b_1}+   \frac{2\eta m^2 B_G^4 L_F^2}{A} 	+  \frac{2 \eta m^2 B_F^2 L_G^2 }{B}   \biggr)  + \frac{L_f}{2}  \leq  \frac{1}{2\eta},
	\end{eqnarray}
	then the following inequality  holds that:
	\begin{eqnarray}
		\mathbb{E} \| \mathcal{G}_{\eta}(x_a) \|^2 & \leq & \frac{2}{(1- 2\eta L_f)\eta}  \frac{H(\tilde x^0) - H(x^*)}{T}
	\end{eqnarray}
	where $x_a$ is uniformly selected from $\{\{x_t^{s+1}\}^{m-1}_{t=0} \}_{t=0}^{S-1}$ and $T$ is a multiple of $m$,
\end{theorem}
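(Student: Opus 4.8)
The plan is to follow the nonconvex proximal‑SVRG template of \cite{reddi2016proximal,reddi2016fast}, with the extra care needed because $v_t^{s+1}$ is a biased estimate of $\nabla f(x_t^{s+1})$. The first step is a one‑iteration descent estimate. Since $x^{s+1}_{t+1} = \text{Prox}_{\eta h(\cdot)}(x_t^{s+1} - \eta v_t^{s+1})$, the optimality condition of the proximal subproblem produces a subgradient $\xi \in \partial h(x^{s+1}_{t+1})$ with $\xi = \frac{1}{\eta}(x_t^{s+1} - x^{s+1}_{t+1}) - v_t^{s+1}$; combining the convexity inequality for $h$ at $x^{s+1}_{t+1}$ with the $L_f$‑smooth upper bound (\ref{ass3_4}) for $f$ yields $H(x^{s+1}_{t+1}) \le H(x_t^{s+1}) + \langle \nabla f(x_t^{s+1}) - v_t^{s+1},\, x^{s+1}_{t+1} - x_t^{s+1}\rangle - \left(\frac{1}{\eta} - \frac{L_f}{2}\right)\|x^{s+1}_{t+1} - x_t^{s+1}\|^2$. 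A Young's inequality on the cross term, with the splitting coefficient chosen proportional to $\eta$, turns it into $\eta\,\|v_t^{s+1} - \nabla f(x_t^{s+1})\|^2$ plus a controlled fraction of $\|x^{s+1}_{t+1} - x_t^{s+1}\|^2$.

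The second step is the key estimate bounding $\mathbb{E}\|v_t^{s+1} - \nabla f(x_t^{s+1})\|^2$. Unlike Lemma \ref{lem1001}, which bounds it by function‑value gaps and needs strong convexity, here I will bound it by $\mathbb{E}\|x_t^{s+1} - \tilde x^s\|^2$. I decompose $v_t^{s+1} - \nabla f(x_t^{s+1})$ into (i) the outer sampling fluctuation of $I_t$ around its conditional mean, (ii) the \emph{bias} arising because $\nabla F$ is evaluated at the random estimate $\widehat G_t^{s+1}$ rather than at $G(x_t^{s+1})$, and (iii) the fluctuation of $\nabla\widehat G_t^{s+1}$ around $\nabla G(x_t^{s+1})$. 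Using independence of $A_t$, $B_t$, $I_t$, the variance‑reduction identities $\mathbb{E}_t\widehat G_t^{s+1} = G(x_t^{s+1})$ and $\mathbb{E}_t\nabla\widehat G_t^{s+1} = \nabla G(x_t^{s+1})$, the variance bounds $\mathbb{E}_t\|\widehat G_t^{s+1} - G(x_t^{s+1})\|^2 \le \frac{B_G^2}{A}\|x_t^{s+1} - \tilde x^s\|^2$ and $\mathbb{E}_t\|\nabla\widehat G_t^{s+1} - \nabla G(x_t^{s+1})\|^2 \le \frac{L_G^2}{B}\|x_t^{s+1} - \tilde x^s\|^2$, together with the Lipschitz constants and the gradient bounds $B_F, B_G$, each piece is bounded by a constant times $\|x_t^{s+1} - \tilde x^s\|^2$; in particular the bias term (ii) reduces, via $L_F$‑smoothness of $F$ and Jensen, to $\|(\nabla G(x_t^{s+1}))^T(\mathbb{E}_{A_t}\nabla F(\widehat G_t^{s+1}) - \nabla F(G(x_t^{s+1})))\|^2 \le \frac{B_G^4 L_F^2}{A}\|x_t^{s+1} - \tilde x^s\|^2$. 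Collecting terms gives $\mathbb{E}\|v_t^{s+1} - \nabla f(x_t^{s+1})\|^2 \le \left(\frac{L_f^2}{b_1} + \frac{2B_G^4 L_F^2}{A} + \frac{2B_F^2 L_G^2}{B}\right)\mathbb{E}\|x_t^{s+1} - \tilde x^s\|^2$, exactly the combination of constants appearing in condition (\ref{thm3_cond}).

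The third step is telescoping. Writing $x_t^{s+1} - \tilde x^s = \sum_{k=0}^{t-1}(x^{s+1}_{k+1} - x^{s+1}_k)$ and applying Cauchy--Schwarz with $t \le m$ gives $\sum_{t=0}^{m-1}\|x_t^{s+1} - \tilde x^s\|^2 \le m^2\sum_{k=0}^{m-1}\|x^{s+1}_{k+1} - x^{s+1}_k\|^2$ — this is where the $m^2$ in (\ref{thm3_cond}) enters. Summing the one‑step estimate over $t=0,\dots,m-1$ and inserting the variance bound, condition (\ref{thm3_cond}) makes the net coefficient of $\sum_k\mathbb{E}\|x^{s+1}_{k+1} - x^{s+1}_k\|^2$ positive (of order $\frac{1-2\eta L_f}{2\eta}$), so $\mathbb{E}[H(\tilde x^{s+1})] - \mathbb{E}[H(\tilde x^s)] \le -c\sum_k\mathbb{E}\|x^{s+1}_{k+1} - x^{s+1}_k\|^2$. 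Summing over $s=0,\dots,S-1$ telescopes the left side, and using $H(\tilde x^S) \ge H(x^*)$ yields $\sum_{s,t}\mathbb{E}\|x^{s+1}_{t+1} - x_t^{s+1}\|^2 \le \frac{H(\tilde x^0) - H(x^*)}{c}$. Finally, by non‑expansiveness of the proximal map, $\|\eta\,\mathcal{G}_{\eta}(x_t^{s+1})\| = \|x_t^{s+1} - \text{Prox}_{\eta h(\cdot)}(x_t^{s+1} - \eta\nabla f(x_t^{s+1}))\| \le \|x^{s+1}_{t+1} - x_t^{s+1}\| + \eta\|v_t^{s+1} - \nabla f(x_t^{s+1})\|$; squaring, taking expectations, reusing the variance bound from step two, and averaging over the uniformly chosen index $x_a$ converts the telescoped sum into the claimed $O(1/T)$ bound.

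I expect the genuinely delicate part to be step two — isolating the bias term (ii) and showing that, although $v_t^{s+1}$ is not unbiased for $\nabla f(x_t^{s+1})$, the bias is still second order and is absorbed into $\|x_t^{s+1} - \tilde x^s\|^2$ with the very same constants as the variance. This is what distinguishes the composition analysis from ordinary nonconvex Prox‑SVRG and is what forces the mini‑batch sizes $A$, $B$, $b_1$ (hence the $\kappa$‑dependence in the overall query complexity) to be taken large enough that condition (\ref{thm3_cond}) can be met for a constant learning rate $\eta$.
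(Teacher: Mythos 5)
Your proposal is correct in substance and reaches the theorem's $O(1/T)$ rate, but it takes a genuinely different route from the paper's proof in the two places that matter. For the error of $v_t^{s+1}$, your bias/variance split is the same idea as the paper's inline estimates (\ref{iq9001})--(\ref{iq9002}) (the paper routes it through the auxiliary unbiased estimator $u_t^{s+1}$ and pays a factor $2$ via Young's inequality); note, though, that your claim of getting \emph{exactly} the constants of (\ref{thm3_cond}) is optimistic, since in your three-way decomposition the bias piece is not conditionally mean-zero, so the cross terms force an extra constant factor as well -- harmless for the rate, but the precise condition would shift. The real divergence is the telescoping. The paper applies Lemma 2 of \cite{reddi2016proximal} twice, once with $d'=\nabla f(x_t^{s+1})$ (producing $\bar x_{t+1}^{s+1}=\text{Prox}_{\eta h(\cdot)}(x_t^{s+1}-\eta\nabla f(x_t^{s+1}))$) and once with $d'=v_t^{s+1}$, and then runs the Lyapunov argument $R_t^{s+1}=\mathbb{E}[H(x_t^{s+1})+c_t\|x_t^{s+1}-\tilde x^s\|^2]$ with the backward recursion for $c_t$, $\beta=1/m$, $c_m=0$, the $m^2$ in (\ref{thm3_cond}) arising from $c_0(1+m)$ and $(1+1/m)^m\le e$. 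You replace this by the elementary bound $\sum_{t}\|x_t^{s+1}-\tilde x^s\|^2\le m^2\sum_{k}\|x_{k+1}^{s+1}-x_k^{s+1}\|^2$, which is simpler and makes the origin of $m^2$ transparent; that is a perfectly valid alternative. What the paper's heavier machinery buys is the exact constant in the statement: its telescoped quantity is $\|\bar x_{t+1}^{s+1}-x_t^{s+1}\|^2=\eta^2\|\mathcal{G}_\eta(x_t^{s+1})\|^2$ itself, with coefficient $\frac{1}{2\eta}-L_f$, so $\frac{2}{(1-2\eta L_f)\eta}$ drops out with no further step. You telescope the actual iterate displacements $\|x_{t+1}^{s+1}-x_t^{s+1}\|^2$ and only at the end convert to the gradient mapping via prox non-expansiveness, which costs an additional splitting $2\|x_{t+1}^{s+1}-x_t^{s+1}\|^2+2\eta^2\|v_t^{s+1}-\nabla f(x_t^{s+1})\|^2$ and a second use of the $m^2$ bound; this proves the same $O\big((H(\tilde x^0)-H(x^*))/T\big)$ statement but with a constant strictly larger than the one claimed, so under your route the theorem's constant (and possibly the numerical factors in (\ref{thm3_cond})) would have to be restated, or the final conversion tightened by telescoping the true-gradient prox residual directly as the paper does.
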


As per Theorem \ref{thm3},  we need to choose $m$, $A$, $B$, $b_1$ and $\eta$ appropriately to make condition (\ref{thm3_cond}) satisfied. We provide an example to show how to select these parameters.

\begin{corollary}
	According to Theorem \ref{thm3}, we let $m=\left \lfloor (n_1+n_2)^{\frac{1}{3}} \right \rfloor $, $\eta = \frac{1}{4L_f}$, $b_1 = (n_1+n_2)^{\frac{2}{3}}$ and $T$ be a multiple of $m$, it is easy to know that if $A$ and  $B$  are lower bounded:
	\begin{eqnarray}
		A & \geq & \frac{8 m^2 B_G^4 L_F^2 }{L_f} \\
		B & \geq & \frac{8m^2 B_F^2 L_G^2}{L_f}
	\end{eqnarray}
	we can obtain  sublinear convergence rate for VRSC-PG:
	\begin{eqnarray}
		\mathbb{E} \| \mathcal{G}_{\eta}(x_a) \|^2 & \leq & {16L_f } \frac{H(\tilde x^0) - H(x^*)}{T}
	\end{eqnarray}
\end{corollary}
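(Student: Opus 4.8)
The plan is to derive this corollary directly from Theorem \ref{thm3}: once I check that the proposed choices of $m$, $\eta$, $b_1$ together with the stated lower bounds on $A$ and $B$ satisfy the feasibility condition (\ref{thm3_cond}), the asserted rate follows simply by evaluating the prefactor $\frac{2}{(1-2\eta L_f)\eta}$ at $\eta = \frac{1}{4L_f}$. So there are two essentially independent tasks: confirming admissibility of the parameters, and simplifying the resulting constant. No new analytical ideas are needed beyond Theorem \ref{thm3}.

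First I would verify (\ref{thm3_cond}). Substituting $\eta = \frac{1}{4L_f}$ gives $\frac{1}{2\eta} = 2L_f$, so after moving the $\frac{L_f}{2}$ term to the right the condition reduces to requiring that the bracketed quantity, multiplied by $4$, be at most $\frac{3L_f}{2}$. I would then bound the three terms of the bracket separately. For the first term, $4\cdot\frac{\eta m^2 L_f^2}{b_1} = \frac{L_f m^2}{b_1}$, and because $m = \lfloor (n_1+n_2)^{1/3}\rfloor \le (n_1+n_2)^{1/3}$ we have $m^2 \le (n_1+n_2)^{2/3} = b_1$, so this term is at most $L_f$; note that the floor is precisely what guarantees $m^2 \le b_1$. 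For the remaining two terms, the stated lower bounds $A \ge \frac{8 m^2 B_G^4 L_F^2}{L_f}$ and $B \ge \frac{8 m^2 B_F^2 L_G^2}{L_f}$ are calibrated so that substituting them (together with the value of $\eta$) cancels the $m^2 B_G^4 L_F^2$ and $m^2 B_F^2 L_G^2$ factors, leaving each of $4\cdot\frac{2\eta m^2 B_G^4 L_F^2}{A}$ and $4\cdot\frac{2\eta m^2 B_F^2 L_G^2}{B}$ controlled by a constant fraction of $L_f$. Summing the three contributions keeps the total within the budget $\frac{3L_f}{2}$, so (\ref{thm3_cond}) holds and Theorem \ref{thm3} applies.

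Second, I would evaluate the convergence constant. With $\eta = \frac{1}{4L_f}$ we have $1 - 2\eta L_f = \frac{1}{2}$ and hence $(1-2\eta L_f)\eta = \frac{1}{8L_f}$, so the prefactor $\frac{2}{(1-2\eta L_f)\eta}$ equals $16L_f$. Plugging this into the bound of Theorem \ref{thm3} immediately yields $\mathbb{E}\|\mathcal{G}_\eta(x_a)\|^2 \le 16L_f\,\frac{H(\tilde x^0)-H(x^*)}{T}$, which is exactly the claimed inequality.

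The work here is bookkeeping rather than a fresh argument, so the only delicate point is keeping the feasibility check tight: the budget $\frac{3L_f}{2}$ must accommodate the first term (up to $L_f$) together with the two sampling-error terms, which forces the lower bounds on $A$ and $B$ to absorb the remaining $\frac{L_f}{2}$ of slack split between them. I would therefore carry out the three term-by-term estimates carefully and confirm that the constants add up to at most $\frac{3L_f}{2}$; once admissibility is secured, the final rate with constant $16L_f$ is immediate from Theorem \ref{thm3}.
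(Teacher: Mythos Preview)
Your overall plan is exactly the intended one --- the paper does not supply a separate proof of this corollary, and the two steps you outline (check condition (\ref{thm3_cond}), then evaluate the prefactor) are all there is to do. The second step is flawless: $\eta=\tfrac{1}{4L_f}$ gives $(1-2\eta L_f)\eta=\tfrac{1}{8L_f}$ and hence the constant $16L_f$.

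There is, however, a genuine arithmetic slip in your feasibility check. With $\eta=\tfrac{1}{4L_f}$ and $A\ge \tfrac{8m^2B_G^4L_F^2}{L_f}$ one obtains
\[
4\cdot\frac{2\eta m^2 B_G^4 L_F^2}{A}\;=\;\frac{2m^2B_G^4L_F^2}{L_f\,A}\;\le\;\frac{1}{4},
\]
a \emph{pure constant}, not ``a constant fraction of $L_f$'' as you write; the $B$ term behaves identically. The left side of (\ref{thm3_cond}) therefore bounds as $L_f+\tfrac14+\tfrac14+\tfrac{L_f}{2}=\tfrac{3L_f}{2}+\tfrac12$, while the right side is $2L_f$, so the inequality holds only when $L_f\ge 1$. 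Your sentence ``Summing the three contributions keeps the total within the budget $\tfrac{3L_f}{2}$'' thus hides an unjustified step. This looks like a dimensional inconsistency in the corollary statement itself (the denominators of the $A,B$ bounds would need $L_f^2$ rather than $L_f$ to make each term genuinely $\le \tfrac{L_f}{4}$), but your write-up should flag it rather than paper over it.
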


As per Algorithm \ref{algorithmSVRG2}  and the definition of Sampling Oracle in \cite{wang2016accelerating},  to obtain $\varepsilon$-accurate solution,  $	 \mathbb{E} \| \mathcal{G}_{\eta}(x_a) \|^2   \leq \varepsilon$, the total query complexity we need to take is $O(n_1 + n_2 +  \frac{A + B + b_1}{\varepsilon}) =O\biggl( n_1 + n_2+ \frac{(n_1+n_2)^{{2}/{3}}}{\varepsilon}\biggr) $, where $A$, $B$ and $b_1$ are proportional to $(n_1+n_2)^{\frac{2}{3}} $. Therefore, our method improves  the state-of-the-art convergence rate of stochastic composition optimization for general problem from $O(T^{-1/2})$ (Optimal convergence rate for ASC-PG) to $O\biggl( (n_1+n_2)^{{2}/{3}}T^{-1}\biggr)$.

\begin{figure*}[!t]
	\centering
	\begin{subfigure}[b]{0.45\textwidth}
		\centering
		\includegraphics[width=3in]{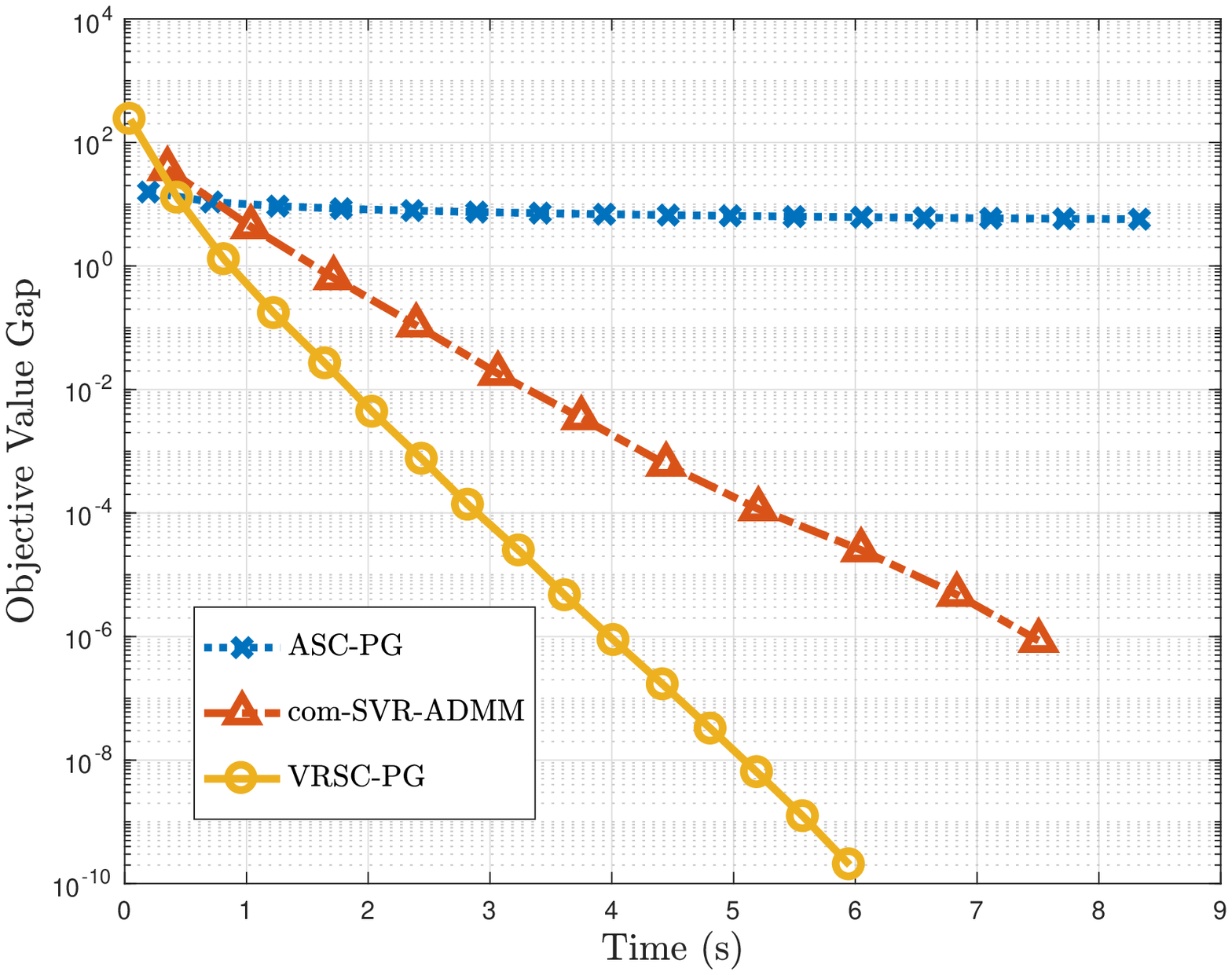}
		\caption{$\kappa_{cov}=2$}
	\end{subfigure}
	\begin{subfigure}[b]{0.45\textwidth}
		\centering
		\includegraphics[width=3in]{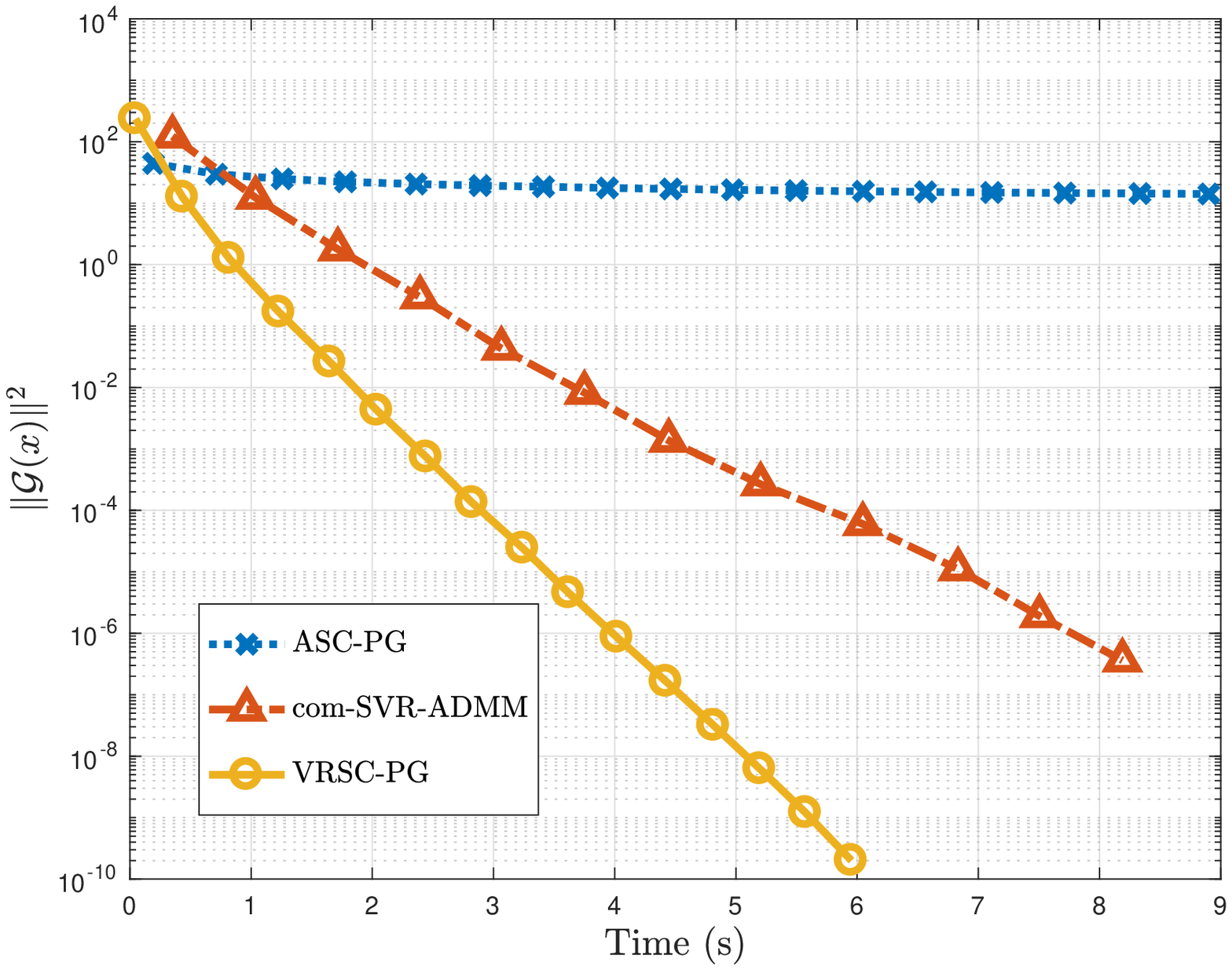}
		\caption{$\kappa_{cov}=2$}
	\end{subfigure}
	\centering
	\begin{subfigure}[b]{0.45\textwidth}
		\centering
		\includegraphics[width=3in]{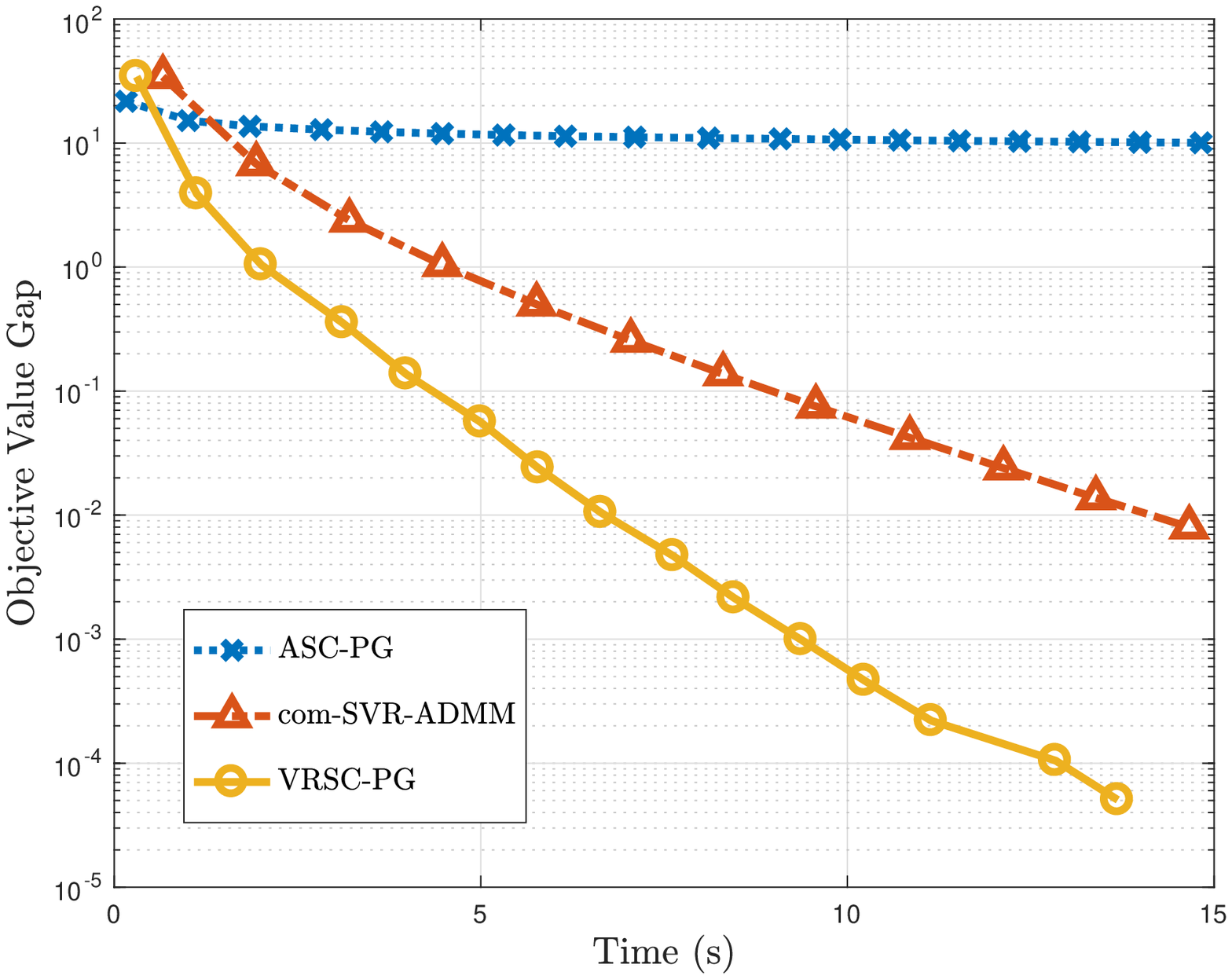}
		\caption{$\kappa_{cov}=10$}
	\end{subfigure}
	\begin{subfigure}[b]{0.45\textwidth}
		\centering
		\includegraphics[width=3in]{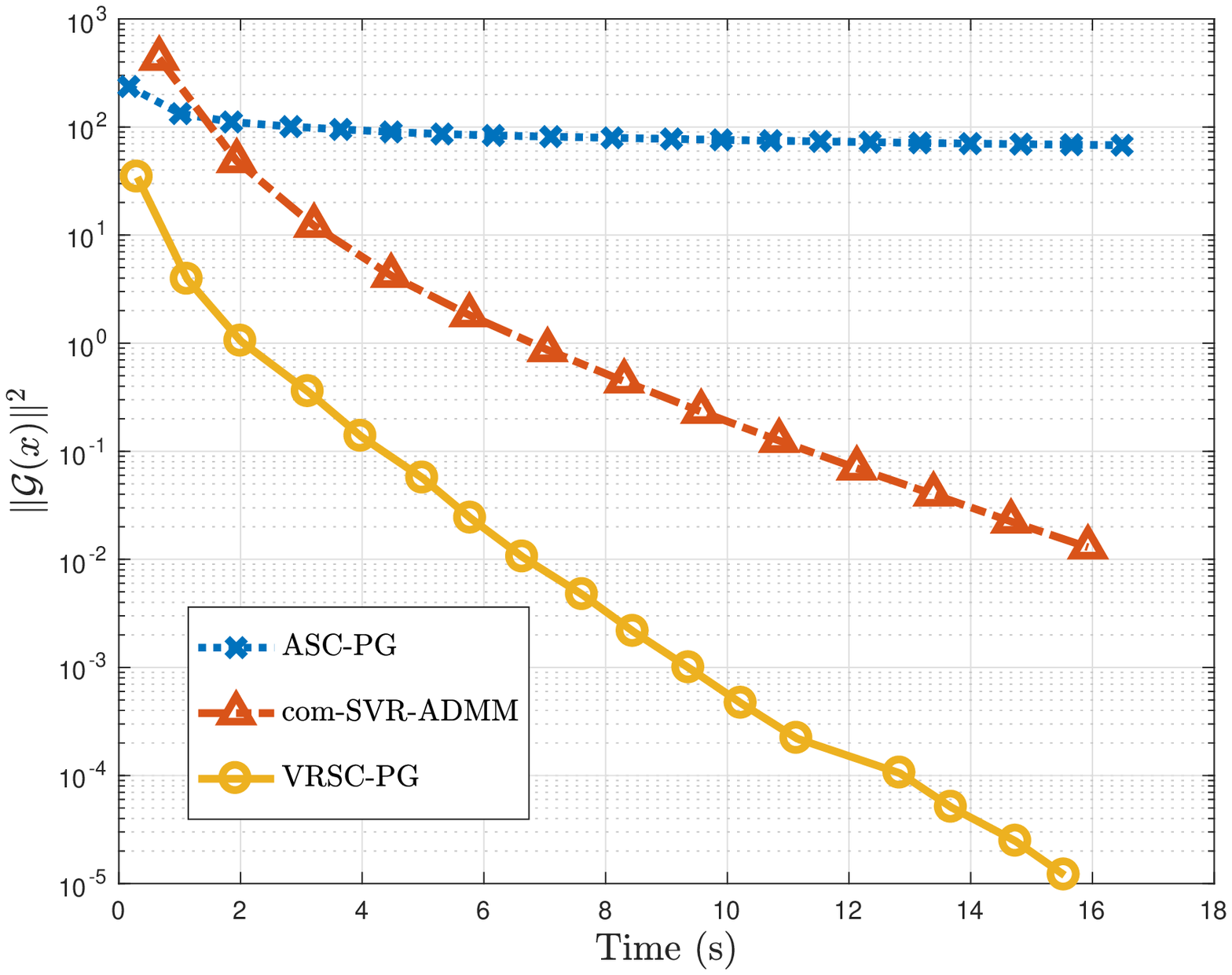}
		\caption{$\kappa_{cov}=10$}
	\end{subfigure}
	\caption{Experimental results for meaning-variance portfolio management on synthetic data. $\kappa_{cov}$
		is the conditional number of the covariance matrix of the corresponding Gaussian distribution which is used to generate reward. We use time as $x$ axis, and it is proportional to the query complexity. In $y$ axis,  the objective value gap is defined as $H(x) - H(x^*) $, where $x^*$ is obtained by running our methods for enough iterations until convergence. $\| \mathcal{G}(x) \|^2$ denotes the $\ell_2$-norm of the full gradient, where $\mathcal{G}(x) = \nabla f(x) + \partial h(x)$. }
	\label{exp1_fig}
\end{figure*}

\section{Experimental Results}
We conduct two experiments to evaluate our proposed method: (1) application to portfolio management; (2) application to policy evaluation in reinforcement learning.

In the experiments, we compare our proposed VRSC-PG with two other related methods:
\begin{itemize}
	\item Accelerated stochastic compositional proximal gradient (ASC-PG)  \cite{wang2016accelerating};
	\item  Stochastic variance reduced ADMM for Stochastic composition optimization (com-SVR-ADMM) \cite{svradmm};
\end{itemize}
In our experiments, learning rate $\eta$ is tuned from $\{1, 10^{-1}, 10^{-2}, 10^{-3}, 10^{-4}\}$. We keep the learning rate constant for  com-SVR-ADMM and VRSC-PG in the optimization. For  ASC-PG, in order to guarantee convergence,  learning rate is decreased as per $ \frac{\eta}{1+t}$, where $t$ denotes the number of iterations.

\begin{figure*}[!t]
	\centering
	\begin{subfigure}[b]{0.45\textwidth}
		\centering
		\includegraphics[width=3in]{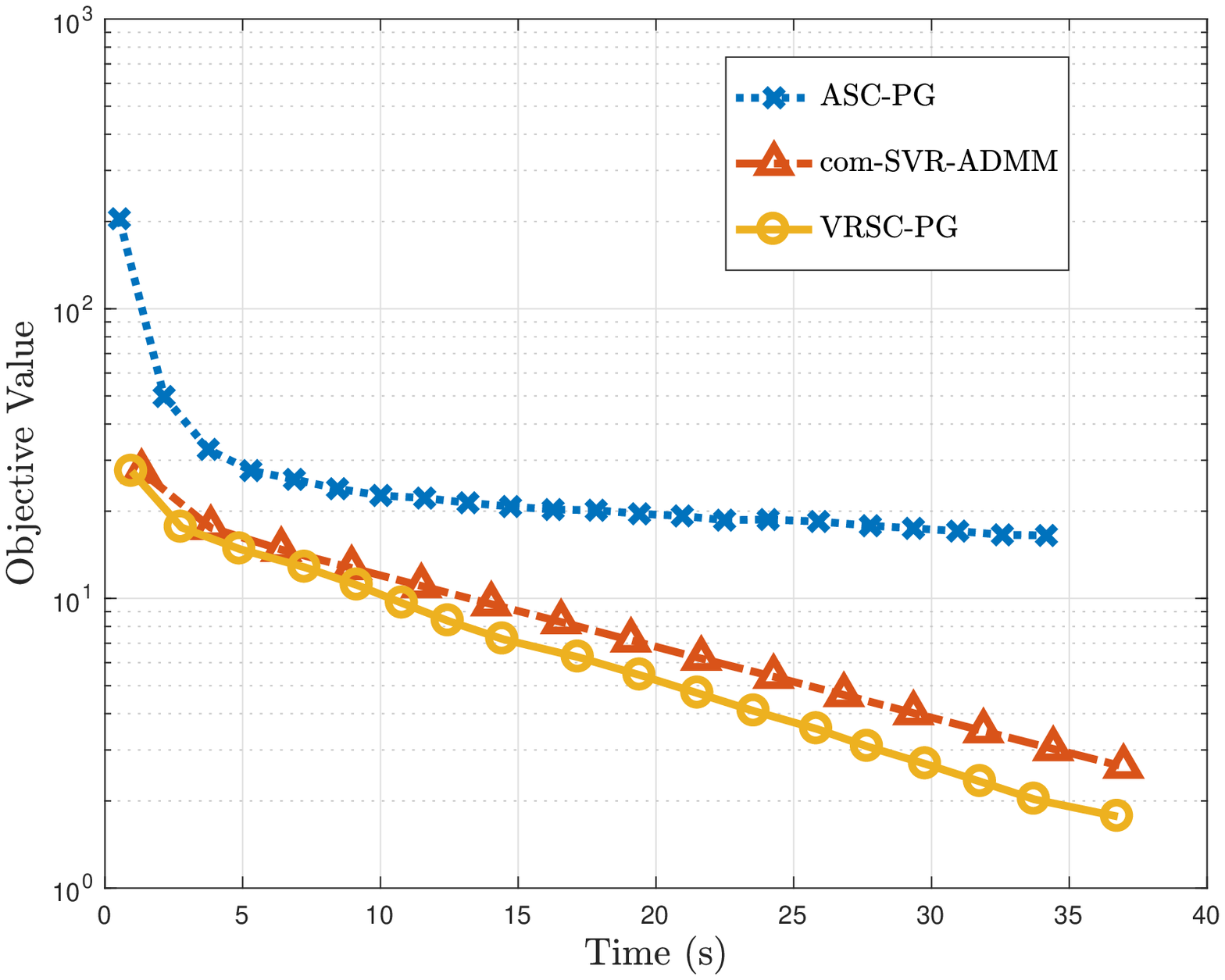}
	\end{subfigure}
	\begin{subfigure}[b]{0.45\textwidth}
		\centering
		\includegraphics[width=3in]{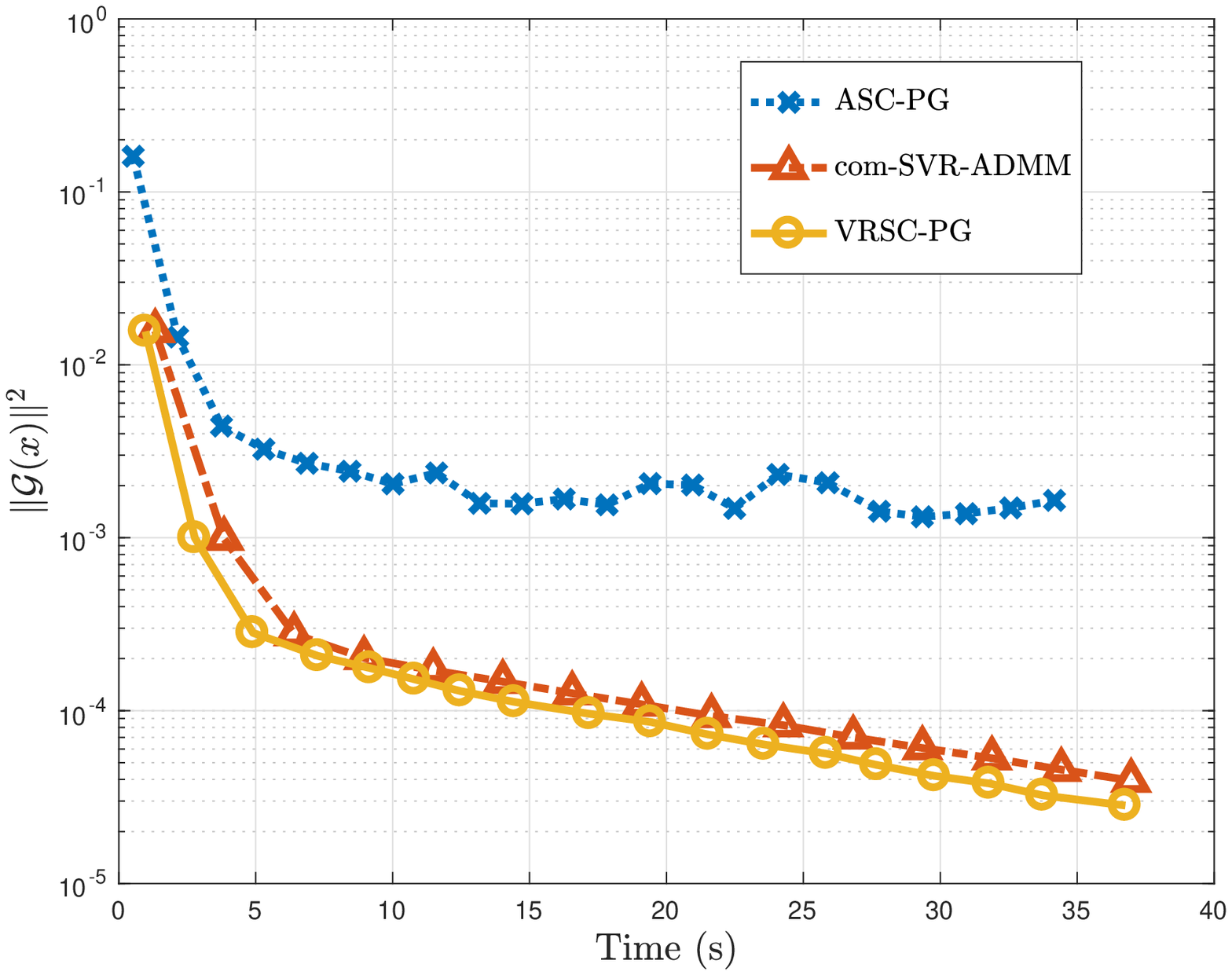}
	\end{subfigure}
	\caption{Figures show the experimental results of policy evaluation in reinforcement learning. We plot the convergence of objective value  and the full gradient $\|\mathcal{G}(x)\|^2$  regarding  time respectively. $\| \mathcal{G}(x) \|^2$ denotes the $\ell_2$-norm of the full gradient, where $\mathcal{G}(x) = \nabla f(x) + \partial h(x)$. }
	\label{exp2_fig}
\end{figure*}
\subsection{Application to Portfolio Management}
Suppose there are $N$ assets we can invest, $r_t \in \mathbb{R}^N$ denotes the rewards of $N$ assets at time $t$.  Our goal is to maximize the return of the investment and to minimize the risk of the investment at the same time. Portfolio management problem can be formulated as the mean-variance optimization as follows:
\begin{eqnarray}
	\label{exp1_obj}
	\min\limits_{x \in \mathbb{R}^N} -\frac{1}{n} \sum\limits_{t=1}^n \left<r_t, x\right> + \frac{1}{n} \sum\limits_{t=1}^n \bigg( \left< r_t, x \right> - \frac{1}{n}\sum\limits_{j=1}^n \left<r_j, x \right>  \bigg)^2
\end{eqnarray}
where $x \in \mathbb{R}^N$ denotes the investment quantity vector in $N$ assets.  According to  \cite{lian2016finite}, problem (\ref{exp1_obj}) can also be viewed as the composition problem as (\ref{formulation1}). 
In our experiment, we also add a nonsmooth regularization penalty $h(x) = \lambda |x|$ in the mean-variance optimization problem (\ref{exp1_obj}).

Similar to the experimental settings in \cite{lian2016finite}, we let $n=2000$ and $N=200$. Rewards $r_t$ are generated in two steps: (1) Generate a Gaussian distribution on $\mathbb{R}^N$, where we define the condition number of its covariance matrix as $\kappa_{cov}$. Because $\kappa_{cov}$ is proportional to $\kappa$, in our experiment, we will control $\kappa_{cov}$ to change the value of $\kappa$; (2) Sample rewards $r_t$  from the Gaussian distribution and make all elements positive to guarantee that this problem has a solution. In the experiment, we compared three methods on two synthetic datasets, which are generated through Gaussian distributions with $\kappa_{cov} = 2$ and $\kappa_{cov}=10$ separately. We set $\lambda = 10^{-3}$ and $A=B=b_1 =5$. We just select the values of $A,B,b_1$ casually, it is probable that we can get better results as long as we tune them carefully.

Figure \ref{exp1_fig} shows the convergence of compared methods regarding time. We suppose that the elapsed time is proportional to the query complexity. Objective value gap means $H(x_t) - H(x^*)$, where $x^*$ is the optimal solution to $H(x)$. We compute $H(x^*)$ by running our method until convergence. Firstly, by observing the $x$ and $y$ axises in Figure \ref{exp1_fig}, we can know that when $\kappa_{cov} = 10$, all compared methods need more time to minimize problem (\ref{exp1_obj}), which is consistent with our analysis. Increasing $\kappa$ will increase the total query complexity. Secondly, we can also find out that com-SVR-ADMM and VRSC-PG admit linear convergence rate.  ASC-PG runs faster at the beginning, because of their low query complexity in each iteration. However, their convergence slows down when the learning rate gets small. In four figures, our SVRC-PG always has the best performance compared to other compared methods.

\subsection{Application to Reinforcement Learning}
We then apply stochastic composition optimization to reinforcement learning and evaluate three compared methods in the task of policy evaluation. In reinforcement learning, let $ V^{\pi}(s)$ be the value of state $s$ under  policy $\pi$. The value function $ V^{\pi}(s)$  can be evaluated through Bellman equation as follows:
\begin{eqnarray}
	\label{bellman}
	V^{\pi}(s_1) = \mathbb{E} [r_{s_1,s_2} + \gamma V^{\pi} (s_2)|s_1  ]
\end{eqnarray}
for all $ s_1,s_2 \in \{1,2,...,S\}$, where $S$ represents the number of total states.   According to \cite{wang2016accelerating}, the Bellman equation (\ref{bellman}) can also be written as a composition problem. In our experiment, we also add sparsity regularization $h(x) = \lambda |x|$ in the objective function.

Following \cite{dann2014policy}, we generate a Markov decision process (MDP). There are $400$ states and $10$ actions at each state. The transition probability is generated randomly from the uniform distribution in the range of  $[0,1]$. We then add $10^{-5}$ to each element of transition matrix to ensure the ergodicity of our MDP. The rewards $r(s,s')$ from state $s$ to state $s'$ are also sampled  uniformly in the range of $[0, 1]$. In our experiment, we set $\lambda = 10^{-3}$ and $A=B=b_1 =5$. We also select these values casually, better results can be obtained if we tune them carefully.

In Figure \ref{exp2_fig}, we plot the convergence of the objective value and $\|\mathcal{G}(x)\|^2$ in terms of time.  We can observe that  VRSC-PG is much faster than ASC-PG, which has been reflected in the analysis of convergence rate already. It is also obvious that our VRSC-PG converges faster than com-SVR-ADMM. Experimental results on policy evaluation also verify our theoretical analysis.

\section{Conclusion}
In this paper, we propose variance reduced stochastic compositional proximal gradient method (VRSC-PG) for composition problem with nonsmooth regularization penalty. We also analyze the convergence rate of our method: (1) for strongly convex composition problem, VRSC-PG is proved to admit linear convergence; (2) for general composition problem, VRSC-PG significantly improves the state-of-the-art convergence rate from $O(T^{-1/2})$ to $O((n_1+n_2)^{{2}/{3}}T^{-1})$. Both of our theoretical analysis, to the best of our knowledge, are the state-of-the-art results for stochastic composition optimization. Finally, we apply our method to two different applications, portfolio management and reinforcement learning. Experimental results show that our method always has the best performance in different cases and verify the conclusions of theoretical analysis.

\bibliographystyle{abbrv}
\bibliography{composite}  

\appendix

\section{Strongly Convex Problem }

\textbf{Proof to Lemma \ref{lem1001}}
\begin{proof}
	Following \cite{xiao2014proximal}, we define function $\phi_i(x)$:
	\begin{eqnarray}
		\phi_i(x) = F_i(G(x)) - F_i(G(x^*)) - \left<(\nabla G(x^*))^T \nabla F_i(G(x^*)),  (x - x^*) \right>  
	\end{eqnarray}
	where $x^* = \arg\min_x H(x)$ denotes the optimal solution of $H(x)$.  It is easy to know that $\nabla \phi_i(x^*)  = 0$ and $\min_x \phi_i(x) = \phi_i(x^*)=0$. For any $x, y \in \mathbb{R}^N$, we have: 
	\begin{eqnarray}
		&& \| \nabla \phi_i(x) - \nabla \phi_i(y)  \|  \nonumber \\
		& = & \| (\nabla G(x))^T \nabla F_i(G(x)) - (\nabla G(y))^T\nabla F_i(G(y)) \| \nonumber \\
		&= & \| \frac{1}{n_2} \sum\limits_{j=1}^{n_2} (\nabla G_j(x) )^T F_i(G(x)) -  \frac{1}{n_2} \sum\limits_{j=1}^{n_2} (\nabla G_j(y) )^T F_i(G(y))  \|  \nonumber \\
		&\leq & \frac{1}{n_2} \sum\limits_{j=1}^{n_2}  \|  (\nabla G_j(x) )^T F_i(G(x)) -   (\nabla G_j(y) )^T F_i(G(y))  \| \nonumber \\
		&\leq & L_f \| x - y  \|
	\end{eqnarray}
	where the first inequality follows from the triangle inequality, and the second inequality follows from (\ref{ass3_3}) in Assumption \ref{ass_bounded gradients}. Therefore, $\nabla \phi_i(x)$ is $L_f$-Lipschitz continuous. According to Theorem 2.1.5 in  \cite{nesterov2013introductory}, we have:
	\begin{eqnarray}
		\frac{1}{2L_f} \|\nabla \phi_i (x) - \nabla \phi_i(x^*)\|^2 &\leq & \phi_i (x) - \phi_i(x^*) -   \left< \nabla \phi_i(x^*) , x-x^* \right>
	\end{eqnarray}
	Because $\phi_i (x^*) = 0$ and $\nabla \phi_i(x^*) = 0$, it follows that:
	\begin{eqnarray}
		&& \frac{1}{2L_f} \| (\nabla G(x))^T  \nabla F_i(G(x) ) - (\nabla G(x^*))^T \nabla F_i(G(x^*)) \|^2  \nonumber \\
		& \leq & F_i(G(x)) - F_i(G(x^*)) - \left< (\nabla G(x^*))^T \nabla F_i(G(x^*)),  (x - x^*) \right>  
	\end{eqnarray}
	Because $x^* = \arg\min_x f(x) + h(x) $, there exists $\xi^* \in  \partial h(x^*) $ so that $ (\nabla G(x^*))^T \nabla F(G(x^*)) + \xi^* =0 $. Then it holds that:
	\begin{eqnarray}
		\label{iq6001}
		&& \frac{1}{ n_1} \sum\limits_{i=1}^{n_1}\| (\nabla G(x))^T \nabla F_i(G(x) ) - (\nabla G(x^*))^T \nabla F_i(G(x^*)) \|^2  \nonumber \\
		&\leq & 2L_f \bigg[  F(G(x)) - F(G(x^*)) - \left< (\nabla G(x^*))^T \nabla F(G(x^*)),  (x - x^*) \right>   \bigg] \nonumber \\
		& = & 2L_f \bigg[  F(G(x)) - F(G(x^*)) + \left<\xi^* ,  (x - x^*) \right>   \bigg] \nonumber \\
		&\leq & 2L_f \bigg[ F(G(x)) - F(G(x^*)) + h(x) - h(x^*)  \bigg] \nonumber \\
		&\leq & 2L_f \bigg[ H(x) - H(x^*)  \bigg]
	\end{eqnarray}
	where the third inequality is from the convexity of $h(x)$.  We let $u_t$ be the unbiased estimate of  $\nabla f(x_t^{s+1})$:
	\begin{eqnarray}
		{u}_{t}^{s+1} = \frac{1}{b_1} \sum\limits_{i_t \in I_t} \biggl( \left ( \nabla  G({x}_{t}^{s+1}) \right )^T \nabla  F_{i_t}( {G}(x_t^{s+1}))  - \left ( \nabla  G(\tilde{x}^{s}) \right )^T \nabla  F_{i_t}(\tilde{G}^{s}) \biggr)+ \nabla  f(\tilde{x}^{s})
	\end{eqnarray}
	where $\mathbb{E}  {u}_{t}^{s+1} = \nabla f( {x}_t^{s+1})$.  We can get the upper bound of the variance of $u_t$:
	\begin{eqnarray}\label{lemm1_1}
		&&	\mathbb{E}\|u_t - \nabla f(x_t^{s+1})\|^2 \nonumber\\
		&	 =& \mathbb{E} \|  \frac{1}{b_1} \sum\limits_{i_t \in I_t} \biggl( \left ( \nabla  G({x}_{t}^{s+1}) \right )^T \nabla  F_{i_t}( {G}(x_t^{s+1}))  - \left ( \nabla  G(\tilde{x}^{s}) \right )^T \nabla  F_{i_t}(\tilde{G}^{s}) + \nabla  f(\tilde{x}^{s}) - \nabla f(x_t^{s+1}) \biggr)  \|^2 \nonumber \\
		& =&  \frac{1}{b_1}   \sum\limits_{i_t \in I_t} \mathbb{E}\|  \left ( \nabla  G({x}_{t}^{s+1}) \right )^T \nabla  F_{i_t}( {G}(x_t^{s+1}))  - \left ( \nabla  G(\tilde{x}^{s}) \right )^T \nabla  F_{i_t}(\tilde{G}^{s}) + \nabla  f(\tilde{x}^{s}) - \nabla f(x_t^{s+1})   \|^2 \nonumber \\
		&\leq & \frac{1}{b_1}   \sum\limits_{i_t \in I_t}  \mathbb{E} \|  \left ( \nabla  G({x}_{t}^{s+1}) \right )^T \nabla  F_{i_t}( {G}(x_t^{s+1}))  - \left ( \nabla  G(\tilde{x}^{s}) \right )^T \nabla  F_{i_t}(\tilde{G}^{s})  \|^2 \nonumber \\
		&\leq & \frac{2}{b_1}   \sum\limits_{i_t \in I_t}  \biggl( \mathbb{E}\|  \left ( \nabla  G({x}_{t}^{s+1}) \right )^T \nabla F_{i_t} (G(x_t^{s+1})) -  \left ( \nabla  G({x^*}) \right )^T \nabla F_{i_t} ( G(x^*))  \|^2 \nonumber \\
		&& + 
		\mathbb{E}\|  \left ( \nabla  G({\tilde x^s}) \right )^T \nabla F_{i_t} ( G( \tilde{x}^{s})) -  \left ( \nabla  G({x^*}) \right )^T\nabla F_{i_t} (G(x^*))  \|^2 \biggr) \nonumber \\
		&\leq & 4 L_f\bigg[ H(x_{t}^{s+1} )  - H(x^*)  +  H(\tilde{x}^s) - H(x^*)  \bigg]
	\end{eqnarray}
	where the second equality follows from Lemma \ref{ex_lem2}, the last inequality follows from (\ref{iq6001})
	Then, we compute the upper bound of $\mathbb{E}  \left  \|  {u}_{t}^{s+1}  - {v}_{t}^{s+1} \right  \|^2$:
	\begin{eqnarray}\label{lemm1_2}
		\mathbb{E}  \left  \|  {u}_{t}^{s+1}  - {v}_{t}^{s+1} \right  \|^2  &=&     \mathbb{E}  \left  \|  \frac{1}{b_1}   \sum\limits_{i_t \in I_t}   \biggl(  \left ( \nabla  G({x}_{t}^{s+1}) \right )^T \nabla  F_{i_t}( {G}(x_t^{s+1}))  - \left ( \nabla \widehat{G}_t^{s+1} \right )^T \nabla  F_{i_t}( \widehat{G}_t^{s+1}) \biggr) \right  \|^2 \nonumber \\
		&\leq &
		\frac{1}{b_1}   \sum\limits_{i_t \in I_t}  \underbrace{  \mathbb{E}  \left  \|    \left ( \nabla  G({x}_{t}^{s+1}) \right )^T \nabla  F_{i_t}( {G}(x_t^{s+1}))  - \left ( \nabla \widehat{G}_t^{s+1} \right )^T \nabla  F_{i_t}( \widehat{G}_t^{s+1})  \right  \|^2 }_{T_1}\nonumber \\
	\end{eqnarray}
	where the inequality follows from Lemma \ref{ex_lem2}. Then we can get the upper bound of $T_3$ as follows:
	\begin{eqnarray}
		T_1 
		&\leq &  2\mathbb{E}  \left  \|   \left ( \nabla  G({x}_{t}^{s+1}) \right )^T \nabla  F_{i_t}( {G}(x_t^{s+1}))  - \left ( \nabla G(x_t^{s+1}) \right )^T \nabla  F_{i_t}( \widehat{G}_t^{s+1})  \right   \|^2 \nonumber \\
		&& +  2\mathbb{E}  \left  \|     \left ( \nabla G(x_t^{s+1}) \right )^T \nabla  F_{i_t}( \widehat{G}_t^{s+1})  - \left ( \nabla \widehat{G}_t^{s+1} \right )^T \nabla  F_{i_t}( \widehat{G}_t^{s+1})   \right  \|^2 \nonumber \\
		&\leq &	2B_F^2 \mathbb{E}  \left  \|   \left ( \nabla G(x_t^{s+1}) \right )^T   - \left ( \nabla \widehat{G}_t^{s+1} \right )^T \right  \|^2 + 2B_G^2\mathbb{E}  \left  \| \nabla  F_{i_t}( {G}(x_t^{s+1}))  -  \nabla  F_{i_t}( \widehat{G}_t^{s+1}) \right  \|^2 \nonumber \\
		&\leq &	2B_F^2 \mathbb{E}  \left  \|   \left ( \nabla G(x_t^{s+1}) \right )^T   - \left (  \nabla G(\tilde{x}^s) - \frac{1}{B} \sum\limits_{1 \leq j \leq B} ( \nabla G_{B_t[j]} (\tilde{x}^s)  - \nabla G_{B_t[j]} (x_t^{s+1}) )      \right )^T \right  \|^2 \nonumber \\
		&& + 2B_G^2 L_F^2\mathbb{E}  \left  \|  {G}(x_t^{s+1})  -   \left(G^s - \frac{1}{A} \sum\limits_{1\leq j \leq A } (G_{A_t[j]}(\tilde{x}^s) - G_{A_t[j]}(x_t^{s+1} ))  \right) \right  \|^2 \nonumber \\
		& =  &	\frac{2B_F^2}{B^2} \mathbb{E}  \left  \|    \sum\limits_{1 \leq j \leq B} \left( \nabla G_{B_t[j]} (\tilde{x}^s)  - \nabla G_{B_t[j]} (x_t^{s+1}) +   \nabla G(x_t^{s+1})    -  \nabla G(\tilde{x}^s)       \right )^T \right  \|^2 \nonumber \\
		&& + \frac{ 2B_G^2 L_F^2}{A^2} \mathbb{E}  \left  \|   \sum\limits_{1\leq j \leq A } \left(G_{A_t[j]}(\tilde{x}^s) - G_{A_t[j]}(x_t^{s+1} ) +{G}(x_t^{s+1})  -   G^s    \right) \right  \|^2 \nonumber \\
		&= &	\frac{2B_F^2}{B^2} \sum\limits_{1 \leq j \leq B} \mathbb{E}  \left  \|     \left( \nabla G_{B_t[j]} (\tilde{x}^s)  - \nabla G_{B_t[j]} (x_t^{s+1}) +   \nabla G(x_t^{s+1})    -  \nabla G(\tilde{x}^s)       \right )^T \right  \|^2 \nonumber \\
		&& + \frac{ 2B_G^2 L_F^2}{A^2} \sum\limits_{1\leq j \leq A } \mathbb{E}  \left  \|    \left(G_{A_t[j]}(\tilde{x}^s) - G_{A_t[j]}(x_t^{s+1} ) +{G}(x_t^{s+1})  -   G^s    \right) \right  \|^2 \nonumber \\
		&\leq &	\frac{2B_F^2}{B^2} \sum\limits_{1 \leq j \leq B}  8L_G^2 \left(  \mathbb{E}\| \tilde{x}^s  - x^* \|^2  + \mathbb{E} \|x_t^{s+1} - x^*\|^2 \right)  \nonumber \\
		&& + \frac{ 2B_G^2 L_F^2}{A^2} \sum\limits_{1\leq j \leq A } 8B_G \left(  \mathbb{E}\| \tilde{x}^s  - x^* \|^2  + \mathbb{E} \|x_t^{s+1} - x^*\|^2  \right) \nonumber \\
		& =  & \nonumber 16\left ( \frac{B_F^2L_G^2}{B} + \frac{B_G^4L_F^2}{A} \right ) \mathbb{E}  \left ( \|x^{s+1}_t-x^*\|^2 + \|\tilde x^{s}-x^*\|^2\right )
		\\  & \leq  &  \frac{32}{\mu}\left ( \frac{B_F^2L_G^2}{B} + \frac{B_G^4L_F^2}{A} \right )  \mathbb{E} \left ( H({x}_t^{s+1}) -H({x}^*) + H( \tilde{x}^{s}) -H({x}^*) \right )
		\label{fuck}
	\end{eqnarray}
	
	where the first inequality follows from Lemma \ref{ex_lem2}, the second, third fourth inequality follows from Assumption \ref{ass_bounded gradients}, the third equality follows from Lemma \ref{ex_lem1}, the last inequality follows from the strong convexity of $H(x)$, such that $ \| x- x^*\|^2 \leq \frac{2}{\mu}  (H(x) - H(x^*))$. Combine (\ref{lemm1_1}) and (\ref{lemm1_2}), then we have:
	\begin{eqnarray}
		&&\mathbb{E}\|v_t - \nabla f(x_t^{s+1})\|^2 \nonumber \\
		& \leq & 2\mathbb{E}  \left  \|  {u}_{t}^{s+1}  - {v}_{t}^{s+1} \right  \|^2  + 2	\mathbb{E}\|u_t - \nabla f(x_t^{s+1})\| \nonumber \\
		&\leq & \biggl( \frac{64}{\mu}\left ( \frac{B_F^2L_G^2}{B} + \frac{B_G^4L_F^2}{A} \right ) + {8L_f}  \biggr)  \biggl[H(x_{t}^{s+1}) - H(x^*)  + H(\tilde{x}^{s} ) - H(x^*) \biggr]
	\end{eqnarray}
	where the first inequality follows from Lemma  \ref{ex_lem2}.\\ \\
\end{proof}

\begin{lemma} \label{SVRG-FSC-2_lem2}
	Suppose all assumption hold, given $x^*$ is the optimal solution to problem $H(x)$, there exists a constant $\alpha >0$ such that:
	\begin{eqnarray}\label{SVRG-FSC-2_lem2_0}
		&& \mathbb{E} \left \langle  {v}_{t}^{s+1} - \nabla f( {x}_t^{s+1}) ,  x^* - x^{s+1}_{t+1}  \right \rangle
		\nonumber	\\  \nonumber & \leq  &   \left ( 6 \eta L_f + \left ( \frac{\eta}{2} +  \frac{1}{2\alpha} \right ) \frac{32 }{\mu}\left ( \frac{B_F^2L_G^2}{B} + \frac{B_G^4L_F^2}{A} \right )  \right ) \mathbb{E} \left ( H({x}_t^{s+1}) -H({x}^*) + H(\tilde{x}^{s}) -H({x}^*) \right )
		\\  &   &   + \frac{\alpha }{\mu} \mathbb{E} \left ( H({x}_{t+1}^{s+1}) -H({x}^*)\right )
	\end{eqnarray}
\end{lemma}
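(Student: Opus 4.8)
The plan is to decompose the inner product $\langle v_t^{s+1} - \nabla f(x_t^{s+1}),\, x^* - x_{t+1}^{s+1}\rangle$ by introducing the auxiliary iterate that would be produced by the \emph{unbiased} estimate $u_t^{s+1}$. Concretely, I would split $v_t^{s+1} - \nabla f(x_t^{s+1}) = (v_t^{s+1} - u_t^{s+1}) + (u_t^{s+1} - \nabla f(x_t^{s+1}))$ and handle the two pieces separately. For the bias term $v_t^{s+1} - u_t^{s+1}$, I would use Cauchy--Schwarz followed by Young's inequality with a free parameter, namely $\langle v_t^{s+1}-u_t^{s+1}, x^*-x_{t+1}^{s+1}\rangle \le \frac{1}{2\alpha}\|v_t^{s+1}-u_t^{s+1}\|^2 + \frac{\alpha}{2}\|x^*-x_{t+1}^{s+1}\|^2$, then invoke the bound on $\mathbb{E}\|u_t^{s+1}-v_t^{s+1}\|^2$ established inside the proof of Lemma \ref{lem1001} (inequality (\ref{fuck})), which already gives the $\frac{32}{\mu}\left(\frac{B_F^2L_G^2}{B}+\frac{B_G^4L_F^2}{A}\right)$ factor, and use strong convexity $\|x^*-x_{t+1}^{s+1}\|^2 \le \frac{2}{\mu}(H(x_{t+1}^{s+1})-H(x^*))$ to produce the last term with the $\frac{\alpha}{\mu}$ coefficient.

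For the zero-mean term $u_t^{s+1} - \nabla f(x_t^{s+1})$, I would not be able to simply drop it by taking expectation, because $x_{t+1}^{s+1}$ depends on $u_t^{s+1}$ (equivalently on the sampled index set $I_t$). The standard device here is to compare $x_{t+1}^{s+1}$ against the iterate $\bar x_{t+1}^{s+1} = \mathrm{Prox}_{\eta h(\cdot)}(x_t^{s+1}-\eta \nabla f(x_t^{s+1}))$ obtained from the \emph{full} gradient, which is measurable with respect to the past: write $\langle u_t^{s+1}-\nabla f(x_t^{s+1}), x^*-x_{t+1}^{s+1}\rangle = \langle u_t^{s+1}-\nabla f(x_t^{s+1}), x^*-\bar x_{t+1}^{s+1}\rangle + \langle u_t^{s+1}-\nabla f(x_t^{s+1}), \bar x_{t+1}^{s+1}-x_{t+1}^{s+1}\rangle$. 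The first summand has zero expectation since $\bar x_{t+1}^{s+1}$ is independent of $I_t$ and $\mathbb{E}u_t^{s+1} = \nabla f(x_t^{s+1})$. For the second summand, I would use Cauchy--Schwarz plus the nonexpansiveness of the proximal operator, which gives $\|\bar x_{t+1}^{s+1}-x_{t+1}^{s+1}\| \le \eta\|u_t^{s+1}-\nabla f(x_t^{s+1})\|$, hence a bound by $\eta\,\mathbb{E}\|u_t^{s+1}-\nabla f(x_t^{s+1})\|^2$; then I apply the variance bound (\ref{lemm1_1}) from the proof of Lemma \ref{lem1001}, namely $\mathbb{E}\|u_t^{s+1}-\nabla f(x_t^{s+1})\|^2 \le 4L_f[H(x_t^{s+1})-H(x^*)+H(\tilde x^s)-H(x^*)]$, which after collecting constants contributes the $6\eta L_f$ term (the constant $6$ accommodating the slack absorbed from combining the $\frac{\alpha}{2}\|x^*-x_{t+1}^{s+1}\|^2$ piece and the cross terms). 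Finally I would combine all pieces, set $\alpha$ to be the $\alpha$ in the statement, and absorb the $\frac{\eta}{2}$ contribution from the Young step into the coefficient $\left(\frac{\eta}{2}+\frac{1}{2\alpha}\right)$ that multiplies $\frac{32}{\mu}\left(\frac{B_F^2L_G^2}{B}+\frac{B_G^4L_F^2}{A}\right)$.

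The main obstacle I anticipate is bookkeeping the dependence structure carefully enough to justify that the cross term $\langle u_t^{s+1}-\nabla f(x_t^{s+1}), x^*-\bar x_{t+1}^{s+1}\rangle$ truly vanishes in expectation — this requires being explicit that $\widehat G_t^{s+1}$ and $\nabla\widehat G_t^{s+1}$ are built from $A_t,B_t$ independently of $I_t$, so that conditionally on everything except $I_t$ the estimate $u_t^{s+1}$ (which uses the \emph{exact} $G(x_t^{s+1})$ and $\nabla G(x_t^{s+1})$) is genuinely unbiased, whereas $v_t^{s+1}$ is not. A secondary technical point is tracking which Young's-inequality parameters get merged where, so that the final constants line up exactly as $6\eta L_f$, $\left(\frac{\eta}{2}+\frac{1}{2\alpha}\right)\frac{32}{\mu}(\cdots)$, and $\frac{\alpha}{\mu}$; the rest of the argument is a routine reuse of inequalities (\ref{lemm1_1}) and (\ref{fuck}) already proved above together with proximal nonexpansiveness and strong convexity of $H$.
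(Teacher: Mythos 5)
Your proposal follows essentially the same route as the paper's proof: the split through the unbiased estimate $u_t^{s+1}$, the comparison iterate $\bar{x}_{t+1}^{s+1}=\mathrm{Prox}_{\eta h(\cdot)}(x_t^{s+1}-\eta\nabla f(x_t^{s+1}))$ so that the zero-mean cross term vanishes, Young's inequality with parameter $\alpha$ on the bias term, and the reuse of the bounds (\ref{lemm1_1}) and (\ref{fuck}) together with strong convexity. One small correction: since $x_{t+1}^{s+1}$ is computed from $v_t^{s+1}$ (not $u_t^{s+1}$), prox nonexpansiveness gives $\|\bar{x}_{t+1}^{s+1}-x_{t+1}^{s+1}\|\leq\eta\|\nabla f(x_t^{s+1})-v_t^{s+1}\|$, which one then splits by the triangle inequality into $\|\nabla f-u\|+\|u-v\|$; this, rather than the $\frac{\alpha}{2}\|x^*-x_{t+1}^{s+1}\|^2$ piece, is exactly where the coefficient $\frac{3}{2}\eta\cdot 4L_f=6\eta L_f$ and the extra $\frac{\eta}{2}$ in $\left(\frac{\eta}{2}+\frac{1}{2\alpha}\right)$ originate.
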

\begin{proof}
	Define $\bar{x}^{s+1}_{t+1}  =\textrm{Prox}_{ \eta h(\cdot) }\left ( x^{s+1}_t - {\eta}
	\nabla f( {x}_t^{s+1}) \right )$,  we have that
	\begin{eqnarray}\label{AsySPSAGA_lem2_1}
		&&\nonumber  \mathbb{E} \left \langle  {v}_{t}^{s+1} - \nabla f( {x}_t^{s+1}) ,  x^* - x^{s+1}_{t+1}  \right \rangle
		\\  & =  & \nonumber   \mathbb{E} \left \langle  {u}_{t}^{s+1} - \nabla f( {x}_t^{s+1}) ,  x^* - x^{s}_{t+1}  \right \rangle +  \mathbb{E} \left \langle   {v}_{t}^{s+1}  - {u}_{t}^{s+1} ,  x^* - x^{s+1}_{t+1}  \right \rangle
		\\  & =  & \nonumber   \underbrace{\mathbb{E} \left \langle  {u}_{t}^{s+1} - \nabla f( {x}_t^{s+1}) ,  x^* - \bar{x}^{s+1}_{t+1}  \right \rangle}_{=0} + \mathbb{E} \left \langle  {u}_{t}^{s+1} - \nabla f( {x}_t^{s+1}) ,  \bar{x}_{t+1}^{s+1} - x^{s+1}_{t+1}  \right \rangle \\
		&& \nonumber +  \mathbb{E} \left \langle   {v}_{t}^{s+1}  - {u}_{t}^{s+1} ,  x^* - x^{s+1}_{t+1}  \right \rangle
		\\  & \leq  & \nonumber   \mathbb{E} \left \|  {u}_{t}^{s+1} - \nabla f( {x}_t^{s+1}) \right \| \left  \|  \bar{x}_{t+1}^{s+1} - x^{s+1}_{t+1} \right  \| +  \mathbb{E} \left \langle   {v}_{t}^{s+1}  - {u}_{t}^{s+1} ,  x^* - x^{s+1}_{t+1}  \right \rangle
		\\  & \leq  & \nonumber   \eta \mathbb{E} \left \|  {u}_{t}^{s+1} - \nabla f( {x}_t^{s+1}) \right \| \left  \|  \nabla f( {x}_t^{s+1}) - {v}_{t}^{s+1} \right  \| +  \mathbb{E} \left \langle   {v}_{t}^{s+1}  - {u}_{t}^{s+1} ,  x^* - x^{s+1}_{t+1}  \right \rangle
		\\  & \leq  & \nonumber   \eta \mathbb{E} \left \|  {u}_{t}^{s+1} - \nabla f( {x}_t^{s+1}) \right \| \left ( \left  \|  \nabla f( {x}_t^{s+1}) - {u}_{t}^{s+1} \right  \| + \left  \|  {u}_{t}^{s+1}  - {v}_{t}^{s+1} \right  \|  \right )  \\
		&& \nonumber + \frac{1}{2\alpha} \mathbb{E} \left \|   {v}_{t}^{s+1}  - {u}_{t}^{s+1} \right \|^2 +   \frac{\alpha}{2} \mathbb{E}\left \|  x^* - x^{s+1}_{t+1}  \right \|^2
		\\  & \leq  & \nonumber   \eta \mathbb{E} \left \|  {u}_{t}^{s+1} - \nabla f( {x}_t^{s+1}) \right \|^2 + \frac{\eta}{2} \mathbb{E} \left \| {u}_{t}^{s+1} - \nabla f( {x}_t^{s+1}) \right \|^2 + \frac{\eta}{2} \left  \|  {u}_{t}^{s+1}  - {v}_{t}^{s+1} \right  \|^2 \\
		&& \nonumber + \frac{1}{2\alpha} \mathbb{E} \left \|   {v}_{t}^{s+1}  - {u}_{t}^{s+1} \right \|^2 +   \frac{\alpha}{2} \mathbb{E}\left \|  x^* - x^{s+1}_{t+1}  \right \|^2
		\\  & =  &    \frac{3}{2}\eta  {  \mathbb{E} \left \|  {u}_{t}^{s+1} - \nabla f( {x}_t^{s+1}) \right \|^2} + \left ( \frac{\eta}{2} +  \frac{1}{2\alpha} \right ) {\mathbb{E}  \left  \|  {u}_{t}^{s+1}  - {v}_{t}^{s+1} \right  \|^2} +   \frac{\alpha}{2} \mathbb{E}\left \|  x^* - x^{s+1}_{t+1}  \right \|^2 \nonumber \\
		& \leq  & \nonumber  \left ( 6 \eta L_f + \left ( \frac{\eta}{2} +  \frac{1}{2\alpha} \right ) \frac{32 }{\mu}\left ( \frac{B_F^2L_G^2}{B} + \frac{B_G^4L_F^2}{A} \right )  \right ) \mathbb{E} \left ( H({x}_t^{s+1}) -H({x}^*) + H(\tilde{x}^{s}) -H({x}^*) \right ) \nonumber \\
		&   &   + \frac{\alpha }{\mu} \mathbb{E} \left ( H({x}_{t+1}^{s+1}) -H({x}^*)\right )
	\end{eqnarray}
	where the third inequality follows from triangle inequality and Young's inequality, the last inequality follows from Lemma \ref{lem1001}.
	This completes the proof.\\ \\
\end{proof}

\noindent \textbf{Proof to Theorem \ref{AsySPSAGA_theorem1}}
\begin{proof}
	We define the virtual  gradient $g_t^{s+1}$ as follows:
	\begin{eqnarray}
		g_t^{s+1} =  \frac{1}{\eta}\left ( x^{s+1}_t - \textrm{Prox}_{ \eta h(\cdot) }\left ( x^{s+1}_t - {\eta} {v}^{s+1}_{t}   \right ) \right )
	\end{eqnarray}
	Because $x^{s+1}_{t+1} = {\arg \min}_{x} \frac{1}{2 \eta } \|x- ( x_{t}^{s+1} - {\eta} {v}^{s+1}_t    )  \|^2 + h(x) $, based on the optimality condition,  we have that $ \frac{1}{\eta}\left ( -x_{t+1}^{s+1}+ ( x_{t}^{s+1} - {\eta} {v}_{t}^{s+1}    ) \right ) \in\partial h(x_{t+1}^{s+1}) $. Thus, we have:
	\begin{eqnarray} \label{Virtual_gradient_equality}
		g_t^{s+1} = \frac{1}{\eta}\left ( x_{t}^{s+1} -x_{t+1}^{s+1} \right ) = {v}^{s+1}_{t} + \xi_{t}^{s+1}
	\end{eqnarray}
	where $\xi_{t}^{s+1} \in \partial h(x_{t+1}^{s+1})$.   Note that the virtual  gradients $g_t^{s+1}$ are only used for the analysis, not computed in the implementation.  Because $x^*$ is the optimal solution,  we have:
	\begin{eqnarray}\label{AsySPSAGA_theorem1_1_1}
		\nonumber H(x^*)& =& f(x^*) + h(x^*)
		\\  & \geq & \nonumber f(x^{s+1}_{t}) + \langle \nabla f(x_t^{s+1}),x^* - x_t^{s+1} \rangle + h(x_{t+1}^{s+1}) + \langle \xi_{t}^{s+1}, x^* - x_{t+1}^{s+1} \rangle
		\\  & \geq & \nonumber f({x}_{t+1}^{s+1}) - \langle \nabla f(x_t^{s+1}),x_{t+1}^{s+1} - x_t^{s+1} \rangle -  \frac{L_f}{2} \| x^{s+1}_{t+1} - x^{s+1}_t \|^2
		\\  &  & \nonumber  + \langle \nabla f(x_t^{s+1}),x^* - x_t^{s+1} \rangle + h(x_{t+1}^{s+1}) + \langle \xi_{t}^{s+1},x^* - x_{t+1}^{s+1} \rangle
		\\  & = & \nonumber H({x}_{t+1}^{s+1}) +  \langle  v_{t}^{s+1}+ \xi_{t}^{s+1}, x^{*} - {x}_t^{s+1} \rangle + \langle  v_{t}^{s+1}+ \xi_{t}^{s+1}, {x}_t^{s+1} - {x}_{t+1}^{s+1} \rangle
		\\  &  & \nonumber  +  \langle \nabla f(x_t^{s+1}) - v_{t}^{s+1}, x^* - x_{t+1}^{s+1} \rangle -  \frac{L_f}{2} \| x_{t+1}^{s+1} - x_t^{s+1} \|^2
		\\  & = &  H({x}_{t+1}^{s+1}) +  \langle  g_t^{s+1},x^{*} - {x}_t^{s+1} \rangle +\eta  \|g_t^{s+1}\|^2
		+ \langle \nabla f(x_t^{s+1}) - v_{t}^{s+1},x^* - x_{t+1}^{s+1} \rangle
		-  \frac{L_f\eta^2}{2}\left \|  g_t^{s+1}  \right \|^2
	\end{eqnarray}
	where the first inequality uses the convexity of $F(x)$ and $R(x)$, the second inequality follows from that $f(x)$ is $L_f$-smooth. Thus, we have:
	\begin{eqnarray}\label{AsySPSAGA_theorem1_1_2}
		&&  \langle  g_t^{s+1},x^{*} - {x}_t^{s+1} \rangle
		\nonumber	\\  & \leq &   H(x^*) - H({x}_{t+1}^{s+1})  - \eta  \|g_t^{s+1}\|^2
		- \langle \nabla f(x_t^{s+1}) - v_{t}^{s+1},x^* - x_{t+1}^{s+1} \rangle
		+  \frac{L_f\eta^2}{2}\left \|  g_t^{s+1}  \right \|^2
	\end{eqnarray}
	
	As per the iteration of $x_t^{s+1}$, we have:
	\begin{eqnarray}\label{AsySPSAGA_theorem1_1}
		&& \mathbb{E}\left \| x^{s+1}_{t+1} -x^*  \right \|^2 = \mathbb{E}\left \| x^{s+1}_{t} - \eta g_t^{s+1} -x^*  \right \|^2
		\nonumber	\\  & = & \nonumber \mathbb{E}\left \| x_t^{s+1}    - x^* \right \|^2 + \eta^2 \mathbb{E}\left \|  g_t^{s+1}   \right \|^2 - 2 \eta \mathbb{E}\left \langle  g_t^{s+1} ,x_t^{s+1}    - x^* \right \rangle
		\\  & \leq & \nonumber \mathbb{E}\left \| x_t^{s+1}    - x^* \right \|^2 + \eta^2 \mathbb{E}\left \|  g_t^{s+1}   \right \|^2  - 2 \eta ( \mathbb{E}H(x_{t+1}^{s+1}) -H(x^*) )+
		\\  &  & \nonumber      2 \eta \mathbb{E} \langle \nabla f( {x}_t^{s+1}) - v_{t}^{s+1}, x_{t+1}^{s+1} - x^* \rangle  - 2\eta^2 \mathbb{E}\left \|  g_t^{s+1}  \right \|^2 +  {L_f\eta^3} \left \|  g_t^{s+1}  \right \|^2
		\\  & {\leq}  & \nonumber \mathbb{E}\left \| x_t^{s+1}    - x^* \right \|^2  - 2 \eta ( \mathbb{E}H(x_{t+1}^{s+1}) -H(x^*) ) +
		2 \eta \mathbb{E} \langle \nabla f( {x}_t^{s+1}) - v_{t}^{s+1}, x_{t+1}^{s+1} - x^* \rangle
		\\  & \leq & \nonumber \mathbb{E}\left \| x_t^{s+1}    - x^* \right \|^2  - 2 \eta ( \mathbb{E}H(x_{t+1}^{s+1}) -H(x^*) ) + 2 \eta \frac{\alpha}{\mu} \mathbb{E} \left ( H({x}_{t+1}^{s+1}) -H({x}^*)\right )
		\\  &  & \nonumber  + 2 \eta  \left ( 6 \eta L_f + \left ( \frac{\eta}{2} +  \frac{1}{2\alpha} \right ) \frac{32}{\mu}\left ( \frac{B_F^2L_G^2}{B} + \frac{B_G^4L_F^2}{A} \right )   \right ) \mathbb{E} \left ( H({x}_t^{s+1}) -H({x}^*) + H(\tilde{x}^{s}) -H({x}^*) \right )
		\\  & = &  \mathbb{E}\left \| x_t^{s+1}    - x^* \right \|^2  - 2 \eta \left (1-\frac{\alpha }{\mu} \right ) ( \mathbb{E}H(x_{t+1}^{s+1}) -H(x^*) )
		\\  &  & \nonumber  + 2 \eta  \left ( 6 \eta L_f + \left ( \frac{\eta}{2} +  \frac{1}{2\alpha} \right ) \frac{32 }{\mu}\left ( \frac{B_F^2L_G^2}{B} + \frac{B_G^4L_F^2}{A} \right )  \right ) \mathbb{E} \left ( H({x}_t^{s+1}) -H({x}^*) + H(\tilde{x}^{s}) -H({x}^*) \right )
	\end{eqnarray}
	where the first inequality follows from (\ref{AsySPSAGA_theorem1_1_2}), the second inequality follows from $\eta L_f \leq 1$, the third inequality follows from Lemma \ref{SVRG-FSC-2_lem2}.  By summing
	the  inequality (\ref{AsySPSAGA_theorem1_1}) over $t = 0,\cdots,m-1$, because $\tilde{x}^{s+1} = x_m^{s+1}$ and $\tilde{x}^s = x_0^{s+1}$, we obtain the following inequality:
	\begin{eqnarray}\label{AsySPSAGA_theorem1_2}
		&&  \nonumber \mathbb{E}\left \| x^{s+1}_{m} -x^*  \right \|^2 + 2 \eta \left (1-\frac{\alpha }{\mu} \right ) ( \mathbb{E}H(x_{m}^{s+1}) -H(x^*) ) + 2 \eta \left (1-\frac{\alpha }{\mu} \right ) \sum_{t=1}^{m-1} ( \mathbb{E}H(x_{t}^{s+1}) -H(x^*) )
		\\  & \leq & \nonumber \mathbb{E}\left \| \tilde x^{s} -x^*  \right \|^2 + 2 \eta  \left ( 6 \eta L_f + \left ( \frac{\eta}{2} +  \frac{1}{2\alpha} \right )\frac{32 }{\mu}\left ( \frac{B_F^2L_G^2}{B} + \frac{B_G^4L_F^2}{A} \right )   \right ) \sum_{t=1}^{m-1} \mathbb{E} \left ( H({x}_t^{s+1}) -H({x}^*) \right )
		\\  &  &  +  2 \eta  \left ( 6 \eta L_f + \left ( \frac{\eta}{2} +  \frac{1}{2\alpha} \right ) \frac{32 }{\mu} \left ( \frac{B_F^2L_G^2}{B} + \frac{B_G^4L_F^2}{A} \right )  \right ) (m+1) \mathbb{E} \left ( H(\tilde {x}^{s}) -H({x}^*) \right ) 
	\end{eqnarray}
	According to (\ref{AsySPSAGA_theorem1_2}) and $ \left \| \tilde x^{s} -x^*  \right \|^2 \leq \frac{2}{\mu} (H(\tilde{x})- H(x^*) )$ , the following inequality holds:
	\begin{eqnarray}\label{AsySPSAGA_theorem1_4}
		&  & \nonumber 2 \eta \left (1-\frac{\alpha }{\mu} -\left ( 6 \eta L_f + \left ( \frac{\eta}{2} +  \frac{1}{2\alpha} \right ) \frac{32 }{\mu}\left ( \frac{B_F^2L_G^2}{B} + \frac{B_G^4L_F^2}{A} \right )   \right ) \right ) \sum_{t=1}^{m} ( \mathbb{E}H(x_{t}^{s+1}) -H(x^*) )
		\\  & \leq &   \left ( \frac{2 }{\mu} + 2 \eta  \left ( 6 \eta L_f + \left ( \frac{\eta}{2} +  \frac{1}{2\alpha} \right ) \frac{32 }{\mu}\left ( \frac{B_F^2L_G^2}{B} + \frac{B_G^4L_F^2}{A} \right )   \right ) (m+1) \right ) \mathbb{E} \left ( H(\tilde{x}^{s}) -H({x}^*) \right )
	\end{eqnarray}
	As per the convexity of $H(x)$ and the definition of $\tilde{x}^{s+1}$, we have $H(\tilde{x}^{s+1}) \leq \frac{1}{m} \sum\limits_{t=1}^m H(x_{t}^{s+1}) $. According to (\ref{AsySPSAGA_theorem1_4}), and let $\alpha=\frac{\mu }{ 8 }$, we have that:
	\begin{eqnarray}\label{AsySPSAGA_theorem1_5}
		&  & \nonumber   \mathbb{E}H(\tilde{x}^{s+1}) -H(x^*) 
		\\  & \leq &  \frac{  \frac{2 }{\mu} + 2 \eta  \left ( 6 \eta L_f + \left ( \frac{\eta}{2} +  \frac{4 }{\mu} \right ) \frac{32 }{\mu}\left ( \frac{B_F^2L_G^2}{B} + \frac{B_G^4L_F^2}{A} \right )  \right ) (m+1) }{2 \eta \left (\frac{7}{8} -\left ( 6 \eta L_f + \left ( \frac{\eta}{2} +  \frac{4 }{\mu} \right ) \frac{32 }{\mu}\left ( \frac{B_F^2L_G^2}{B} + \frac{B_G^4L_F^2}{A} \right ) \right ) \right )m} \mathbb{E}  H(\tilde{x}^{s}) -H({x}^*) 
	\end{eqnarray}
	Define $\rho = \frac{  \frac{2 }{\mu} + 2 \eta  \left ( 6 \eta L_f + \left ( \frac{\eta}{2} +  \frac{4 }{\mu} \right ) \frac{32 }{\mu}\left ( \frac{B_F^2L_G^2}{B} + \frac{B_G^4L_F^2}{A} \right )  \right ) (m+1) }{2 \eta \left (\frac{7}{8} -\left ( 6 \eta L_f + \left ( \frac{\eta}{2} +  \frac{4 }{\mu} \right ) \frac{32 }{\mu}\left ( \frac{B_F^2L_G^2}{B} + \frac{B_G^4L_F^2}{A} \right ) \right ) \right )m} $, where $\rho < 1$.  Applying (\ref{AsySPSAGA_theorem1_5}) recursively, we have:
	\begin{eqnarray}
		\mathbb{E} H(\tilde{x}^{S}) - H(x^*) \leq \rho^S \bigg(H(x^0) - H(x^*)\biggr)
	\end{eqnarray}
	This completes the proof. \\\\	
\end{proof}

\noindent \textbf{Proof to Corollary \ref{AsySPSAGA_theorem2}}
\begin{proof}
	Appropriately choosing $\eta$, $m$ and $A$ in Algorithm \ref{algorithmSVRG2},  we can get the linear convergence rate for our algorithm. We choose $A$ and $B$ to  satisfy:
	\begin{eqnarray}
		\frac{\eta}{2}\frac{32 B_F^2L_G^2}{B\mu } & \leq& \frac{1}{16}
		\\ \frac{\eta}{2}\frac{32 B_G^4L_F^2}{A\mu } & \leq& \frac{1}{16}
		\\ \frac{4 }{\mu}\frac{32 B_F^2L_G^2}{B\mu }  &\leq& \frac{1}{16}
		\\ \frac{4}{\mu}\frac{32 B_G^4L_F^2}{A\mu }  &\leq& \frac{1}{16}
	\end{eqnarray}
	The above inequalities are equivalent with the following conditions:
	\begin{eqnarray}
		A  & \geq & \max \left \{ \frac{256 \eta  B_G^4L_F^2}{\mu } , \frac{2048  B_G^4L_F^2}{\mu^2 }  \right \}
		\\ B  &\geq & \max \left \{ \frac{256 \eta B_F^2L_G^2}{\mu } , \frac{2048  B_F^2L_G^2}{ \mu^2 }  \right \}
	\end{eqnarray}
	We choose $\eta$ satisfying $ 6 \eta L_f \leq \frac{1}{16}$, such that $\eta \leq \frac{1}{96 L_f}$. It follows that:
	\begin{eqnarray}\label{AsySPSAGA_corollary1}
		&& \nonumber \frac{  \frac{2 }{\mu} + 2 \eta  \left ( 6 \eta L_f + \left ( \frac{\eta}{2} +  \frac{4 }{\mu} \right ) \frac{32 }{\mu}\left ( \frac{B_F^2L_G^2}{B} + \frac{B_G^4L_F^2}{A} \right )  \right ) (m+1) }{2 \eta \left (\frac{7}{8} -\left ( 6 \eta L_f + \left ( \frac{\eta}{2} +  \frac{4 }{\mu} \right ) \frac{32 }{\mu}\left ( \frac{B_F^2L_G^2}{B} + \frac{B_G^4L_F^2}{A} \right ) \right ) \right )m}
		\\  & \leq & \nonumber \frac{  \frac{2 }{\mu m} + 2 \eta  \frac{5}{16} (1+\frac{1}{m}) }{2 \eta \frac{9}{16} }
		\\  & = &  \frac{5}{9} + \frac{  \frac{2 }{\mu m} + 2 \eta  \frac{1}{m} }{2 \eta \frac{9}{16} }
	\end{eqnarray}
	We then set $m$ satisfying
	$
	\frac{  \frac{2 }{\mu m} + 2 \eta  \frac{1}{m} }{2 \eta \frac{9}{16}  } \leq \frac{1}{9}
	$, which is equivalent to $m \geq  16 \left ( 1 +  \frac{ 1}{\mu \eta} \right )   $.
	Thus, by setting $\eta$, $m$, $A$ and $B$ appropriately:
	\begin{eqnarray}
		\eta &=& \frac{1}{96 L_f}
		\\   m &=&  16 \left ( 1 +  \frac{ 96 L_f}{\mu } \right )
		\\   A&=& \max \left \{ \frac{8  B_G^4L_F^2}{3 \mu L_f } , \frac{2048  B_G^4L_F^2}{\mu^2 }  \right \}
		\\   B&=& \max \left \{ \frac{8  B_F^2L_G^2}{3 \mu L_f } , \frac{2048  B_F^2L_G^2}{ \mu^2 }  \right \}
	\end{eqnarray}
	we can obtain a linear convergence rate $\frac{2}{3}$.
	This completes the proof.
	
\end{proof}

\section{General Problem }
\noindent \textbf{Proof to Theorem \ref{thm3}}
\begin{proof}
	We define $u_t^{s+1}$ to be an unbiased estimate for $\nabla f(x_t^{s+1}) $:
	\begin{eqnarray}
		{u}_{t}^{s+1} &=& \frac{1}{b_1} \sum\limits_{i_t \in I_t}  \biggl( \left ( \nabla  G({x}_{t}^{s+1}) \right )^T \nabla  F_{i_t}( {G}(x_t^{s+1}))  -\left ( \nabla  G( \tilde{x}^{s}) \right )^T \nabla  F_{i_t}({G}^{s}) \biggr) + \nabla  f(\tilde{x}^{s})
	\end{eqnarray}
	where $I_t$ is uniformly sampled from $\{1,2,...,n_1 \}$ such that $|I_t|=b_1$. We also define:
	\begin{eqnarray}
		\bar{x}_{t+1}^{s+1}& =& \text{Prox}_{\eta h(.)} (x_{t+1}^{s+1} - \eta \nabla f(x_t^{s+1}))
	\end{eqnarray}
	According to Lemma 2 in \cite{reddi2016proximal}, if $y=prox_{\eta h}(x - \eta d'), d' \in \mathbb{R}^d$ and $\forall z \in \mathbb{R}^d$, following equality holds that:
	\begin{eqnarray}
		H(y) \leq H(z) + \left< y-z, \nabla f(x) - d' \right> + \bigg[\frac{L_f}{2} - \frac{1}{2\eta} \bigg] \|y-x\|^2 + \bigg[\frac{L_f}{2} + \frac{1}{2\eta}\bigg] \|z-x\|^2 - \frac{1}{2\eta} \|y-z\|^2
	\end{eqnarray} 
	Thus if we let $y = \bar{x}_{t+1}^{s+1}$, $x = z=x_t^{s+1}$ and $d' = \nabla f(x_{t}^{s+1}) $, we have:
	\begin{eqnarray}
		\mathbb{E}[H(\bar{x}_{t+1}^{s+1})] &\leq& \mathbb{E} \bigg[ H(x_t^{s+1}) + \bigg[\frac{L_f}{2} - \frac{1}{2\eta}\bigg] \|\bar{x}_{t+1}^{s+1} - x_t^{s+1} \|^2  - \frac{1}{2\eta} \|\bar{x}_{t+1}^{s+1} -  x_t^{s+1} \|^2 \bigg].
		\label{iq7001}
	\end{eqnarray}
	If we let $y=x_{t+1}^{s+1}$, $x= x_{t}^{s+1}$,  $z = \bar{x}_{t+1}^{s+1}$ and $d'=v_t^{s+1}$, then we have:
	\begin{eqnarray}
		\label{iq7002}
		\mathbb{E}[H({x}_{t+1}^{s+1})] &\leq& \mathbb{E} \bigg[ H(x_t^{s+1}) + \bigg[\frac{L_f}{2} - \frac{1}{2\eta}\bigg] \|\bar{x}_{t+1}^{s+1} - x_t^{s+1} \|^2 
		+ \bigg[ \frac{L_f}{2} + \frac{1}{2\eta} \bigg]  \|x_{t+1}^{s+1} - x_t^{s+1}  \|^2  \nonumber \\	
		&&	  - \frac{1}{2\eta} \|\bar{x}_{t+1}^{s+1} -  x_t^{s+1} \|^2 + \left< x_{t+1}^{s+1} - \bar{x}_{t+1}^{s+1}, \nabla f(x_t^{s+1}) - v_t^{s+1}  \right>\bigg] .
	\end{eqnarray}
	
	Combine (\ref{iq7001}) and (\ref{iq7002}), we have: 
	\begin{eqnarray}
		\label{iq8001}
		\mathbb{E} [H(x_{t+1}^{s+1} )  ] & \leq & \mathbb{E} \bigg[ H(x_t^{s+1})  + \bigg[L_f - \frac{1}{2\eta}\bigg] \| \bar{x}_{t+1}^{s+1} - x_t^{s+1} \|^2  + \bigg[ \frac{L_f}{2} - \frac{1}{2\eta}  \bigg] \| x_{t+1}^{s+1} - x_t^{s+1} \|^2  \nonumber \\
		&& - \frac{1}{2\eta} \| x_{t+1}^{s+1} - \bar{x}_{t+1}^{s+1} \|^2  +  \underbrace{\left< x_{t+1}^{s+1} - \bar{x}_{t+1}^{s+1}, \nabla f(x_t^{s+1}) - v_t^{s+1}  \right>}_{T_1} \bigg].
	\end{eqnarray}
	
	Then we can have an upper bound of $T_1$: 
	\begin{eqnarray}
		\mathbb{E} [T_1]  &\leq& \frac{1}{2\eta } \mathbb{E} \| x_{t+1}^{s+1} - \tilde{x}_{t+1}^{s+1} \|^2 + \frac{\eta}{2} \| \nabla f(x_t^{s+1}) - v_t^{s+1} \|^2 \nonumber \\
		&\leq & \frac{1}{2\eta } \mathbb{E} \|x_{t+1}^{s+1} - \tilde{x}_{t+1}^{s+1} \|^2 +  {\eta} \mathbb{E}  \| \nabla f(x_t^{s+1}) - u_t^{s+1} \|^2  +  {\eta} \mathbb{E} \| v_t^{s+1} - u_t^{s+1} \|^2 
	\end{eqnarray}
	where the first inequality follows from Cauchy-Schwarz and Young's inequality. As per the definition of $u_t^{s+1}$, we have:
	\begin{eqnarray}
		\label{iq9001}
		&& \mathbb{E}  \| \nabla f(x_t^{s+1}) - u_t^{s+1} \|^2 \nonumber \\
		&=&    \mathbb{E}  \| \frac{1}{b_1} \sum\limits_{i_t \in I_t}   \biggl( \nabla f(x_t^{s+1}) - \left ( \nabla  G({x}_{t}^{s+1}) \right )^T \nabla  F_{i_t}( {G}(x_t^{s+1}))  \nonumber \\
		&& + \left ( \nabla  G( \tilde{x}^{s}) \right )^T \nabla  F_{i_t}({G}^{s}) - \nabla  f(\tilde{x}^{s}) \biggr)\|^2 \nonumber \\
		&=&     \frac{1}{b_1^2} \mathbb{E}  \| \sum\limits_{i_t \in I_t}  \biggl( \nabla f(x_t^{s+1}) - \left ( \nabla  G({x}_{t}^{s+1}) \right )^T \nabla  F_{i_t}( {G}(x_t^{s+1}))  \nonumber \\
		&& + \left ( \nabla  G( \tilde{x}^{s}) \right )^T \nabla  F_{i_t}({G}^{s}) - \nabla  f(\tilde{x}^{s}) \biggr)\|^2 \nonumber \\
		&=&     \frac{1}{b_1^2} \mathbb{E}  \sum\limits_{i_t \in I_t}    \| \nabla f(x_t^{s+1}) - \left ( \nabla  G({x}_{t}^{s+1}) \right )^T \nabla  F_{i_t}( {G}(x_t^{s+1}))  \nonumber \\
		&& + \left ( \nabla  G( \tilde{x}^{s}) \right )^T \nabla  F_{i_t}({G}^{s}) - \nabla  f(\tilde{x}^{s}) \|^2 \nonumber \\
		&\leq &  \frac{1}{b_1^2} \mathbb{E}  \sum\limits_{i_t \in I_t}  \| \left ( \nabla  G({x}_{t}^{s+1}) \right )^T \nabla  F_{i_t}( {G}(x_t^{s+1}))  - \left ( \nabla  G( \tilde{x}^{s}) \right )^T \nabla  F_{i_t}({G}^{s}) \|^2 \nonumber \\
		&\leq & \frac{L_f^2}{b_1} \mathbb{E}\| x_t^{s+1} - \tilde{x}^s \|^2
	\end{eqnarray}
	where  the third equality follows from Lemma \ref{ex_lem1}, the first inequality follows from:
	\begin{eqnarray}
		\label{variance}
		\mathbb{E} \|\xi - \mathbb{E}[\xi] \|^2& =& \mathbb{E} \| \xi \|^2 - \| \mathbb{E}[\xi]\|^2, 
	\end{eqnarray} and the last inequality follows from Assumption \ref{definition1}. We can  bound $\mathbb{E} \| v_t^{s+1} - u_t^{s+1} \|^2 $ as follows:
	\begin{eqnarray}
		\label{iq9002}
		&& \mathbb{E} \| v_t^{s+1} - u_t^{s+1} \|^2  \nonumber \\
		&= & \mathbb{E} \|  \frac{1}{b_1} \sum\limits_{i_t \in I_t}   \biggl( \left ( \nabla  \widehat G_{t}^{s+1} \right )^T \nabla  F_{i_t}( \widehat{G}_t^{s+1}) - \left ( \nabla  G({x}_{t}^{s+1}) \right )^T \nabla  F_{i_t}( G(x_t^{s+1}))   \biggr)  \|^2 \nonumber \\
		&\leq & \frac{1}{b_1} \mathbb{E} \sum\limits_{i_t \in I_t}   \|     \left ( \nabla  \widehat G_{t}^{s+1} \right )^T \nabla  F_{i_t}( \widehat{G}_t^{s+1}) - \left ( \nabla   G({x}_{t}^{s+1}) \right )^T \nabla  F_{i_t}( G(x_t^{s+1}))    \|^2 \nonumber \\			
		&\leq&  \frac{1}{b_1} \sum\limits_{i_t \in I_t}  \bigg[ \frac{2B_G^2 L_F^2}{A^2} \sum\limits_{1\leq j \leq A }  \mathbb{E} \| ( G_{\mathcal{A}_k[j]} (\tilde{x}^s) - G_{\mathcal{A}_k[j]} (x_t^{s+1}) ) - (G^s - G(x_t^{s+1}))   \|^2  \nonumber \\
		& & + \frac{2B_F^2 }{B^2} \sum\limits_{1 \leq j \leq B} \| \nabla G_{\mathcal{B}_t[j]}(x_t^{s+1})  - \nabla G_{\mathcal{B}_t[j]}(\tilde x) + \nabla G(\tilde{x}) - \nabla G(x_{t}^{s+1}) \|^2 \biggr] \nonumber \\
		&\leq&  \frac{1}{b_1} \sum\limits_{i_t \in I_t}  \bigg[ \frac{2B_G^2 L_F^2}{A^2} \sum\limits_{1\leq j \leq A }  \mathbb{E} \| G_{\mathcal{A}_k[j]} (\tilde{x}^s) - G_{\mathcal{A}_k[j]} (x_t^{s+1})    \|^2  \nonumber \\
		& & + \frac{2B_F^2 }{B^2} \sum\limits_{1 \leq j \leq B}\mathbb{E} \| \nabla G_{\mathcal{B}_t[j]}(x_t^{s+1})  - \nabla G_{\mathcal{B}_t[j]}(\tilde x) \|^2 \biggr] \nonumber\\
		&\leq & ( \frac{2B_G^4 L_F^2}{A} 	+  \frac{2B_F^2 L_G^2 }{B}   ) \mathbb{E} \| x_t^{s+1} - \tilde{x}^s \|^2 	
	\end{eqnarray} 
	where the first inequality follows from Lemma \ref{ex_lem2}, the second inequality follows from (\ref{fuck}), the third inequality follows from (\ref{variance}) and the last inequality follows from the Assumption of bounded gradients.
	From (\ref{iq9001}) and (\ref{iq9002}), we get the upper bound of $T_1$:
	\begin{eqnarray}
		\mathbb{E} [T_1] &\leq& \frac{1}{2\eta} \mathbb{E} \|x_{t+1}^{s+1} - \tilde{x}_{t+1}^{s+1} \|^2 +  \frac{\eta L_f^2}{b_1} \mathbb{E}\| x_t^{s+1} - \tilde{x}^s \|^2 + 
		( \frac{2B_G^4 L_F^2}{A} 	+  \frac{2B_F^2 L_G^2 }{B}   )  \mathbb{E} \| x_t^{s+1} - \tilde{x}^s \|^2.
	\end{eqnarray}
	Substituting the upper bound of $T_1$ in (\ref{iq8001}), we have:
	\begin{eqnarray}
		\mathbb{E} [H(x_{t+1}^{s+1} )  ] & \leq & \mathbb{E} \bigg[ H(x_t^{s+1})  + \bigg[L_f - \frac{1}{2\eta}\bigg] \| \bar{x}_{t+1}^{s+1} - x_t^{s+1} \|^2  + \bigg[ \frac{L_f}{2} - \frac{1}{2\eta}  \bigg] \| x_{t+1}^{s+1} - x_t^{s+1} \|^2  \nonumber \\
		&& + \biggl( \frac{\eta L_f^2}{b_1}+ 
		\frac{2B_G^4 L_F^2}{A} 	+  \frac{2B_F^2 L_G^2 }{B}   \biggr) \mathbb{E}\| x_t^{s+1} - \tilde{x}^s \|^2  \bigg].
	\end{eqnarray}
	Following the analysis in \cite{reddi2016fast}, we define the Lyapunov function
	$
	R_t^{s+1} = \mathbb{E} [ H(x_t^{s+1}) + c_t \| x_t^{s+1} - \tilde{x}^s \|^2 ]
	$. Then we have the upper bound of $R_{t+1}^{s+1}$:
	\begin{eqnarray}
		\label{iq_final_1}
		R_{t+1}^{s+1} &=& 	\mathbb{E} \biggl[ H(x_{t+1}^{s+1}) + c_{t+1} \| x_{t+1}^{s+1}  - x_t^{s+1} + x_t^{s+1} - \tilde{x}^s \|^2 ]  \nonumber \\
		&\leq & 	\mathbb{E} [ H(x_{t+1}^{s+1}) + c_{t+1} (1+\frac{1}{\beta}) \| x_{t+1}^{s+1}  - x_t^{s+1} \|^2 + c_{t+1} (1+{\beta}) \| x_t^{s+1} - \tilde{x}^s \|^2 \biggr]  \nonumber \\
		&\leq & 	\mathbb{E} \biggl[ H(x_{t}^{s+1})   + \bigg[L_f - \frac{1}{2\eta}\bigg] \| \bar{x}_{t+1}^{s+1} - x_t^{s+1} \|^2  + \bigg[  c_{t+1} (1+ \frac{1}{\beta}) + \frac{L_f}{2} - \frac{1}{2\eta}  \bigg] \| x_{t+1}^{s+1} - x_t^{s+1} \|^2  \nonumber \\
		&& +  \biggl( c_{t+1} (1+{\beta}) + \frac{\eta L_f^2}{b_1} + 
		\frac{2\eta B_G^4 L_F^2}{A} 	+  \frac{2 \eta B_F^2 L_G^2 }{B}     \biggr) \mathbb{E}\| x_t^{s+1} - \tilde{x}^s \|^2
		\biggr]  \nonumber \\
		&\leq& R_t^{s+1} +  \bigg[L_f - \frac{1}{2\eta}\bigg] \mathbb{E} \| \bar{x}_{t+1}^{s+1} - x_t^{s+1} \|^2 
	\end{eqnarray}
	where the last inequality follows from that: 
	\begin{eqnarray}
		&&c_t =  c_{t+1} (1+ \frac{1}{\beta}) + \frac{\eta L_f^2}{b_1} +  \frac{2 \eta B_G^4 L_F^2}{A} 	+  \frac{2\eta B_F^2 L_G^2 }{B} \\
		&&c_{t+1} (1+ \frac{1}{\beta}) + \frac{L_f}{2} \leq  \frac{1}{2\eta} 
	\end{eqnarray}
	It is easy to know that $c_t$ is decreasing, if we let $c_m=0$ and $\beta = \frac{1}{m}$, then we have:
	\begin{eqnarray}
		c_0 &=& \biggl( \frac{\eta m L_f^2}{b_1}+   \frac{2\eta m B_G^4 L_F^2}{A} 	+  \frac{2 \eta m B_F^2 L_G^2 }{B}   \biggr) ( (1+\frac{1}{m})^m - 1 ) \nonumber \\
		&\leq & \biggl( \frac{\eta m L_f^2}{b_1}+   \frac{2\eta m B_G^4 L_F^2}{A} 	+  \frac{2 \eta m B_F^2 L_G^2 }{B}   \biggr)  (e-1)
	\end{eqnarray}
	where the inequality follows from that $(1+\frac{1}{l})^l$ is increasing if $l> 0$ and $\lim_{l\rightarrow \infty} (1+\frac{1}{l})^l = e$, where $e$ is Euler's number. Therefore, in order to have (\ref{iq_final_1}), it should be satisfied that:
	\begin{eqnarray}
		c_0(1+ \frac{1}{\beta}) + \frac{L_f}{2} &\leq &  2 \biggl( \frac{\eta m L_f^2}{b_1}+   \frac{2\eta m B_G^4 L_F^2}{A} 	+  \frac{2 \eta m B_F^2 L_G^2 }{B}   \biggr)  (1+m) + \frac{L_f}{2}  \nonumber \\
		&\leq &  4 \biggl( \frac{\eta m^2 L_f^2}{b_1}+   \frac{2\eta m^2 B_G^4 L_F^2}{A} 	+  \frac{2 \eta m^2 B_F^2 L_G^2 }{B}   \biggr)  + \frac{L_f}{2}   \nonumber \\ 
		&	\leq & \frac{1}{2\eta}
	\end{eqnarray}
	where the first and second inequalities follow from that $e-1 < 2$ and $m \geq 1$.  Therefore, the condition can be written as:
	\begin{eqnarray}
		4 \biggl( \frac{\eta m^2 L_f^2}{b_1}+   \frac{2\eta m^2 B_G^4 L_F^2}{A} 	+  \frac{2 \eta m^2 B_F^2 L_G^2 }{B}   \biggr)  + \frac{L_f}{2}   &\leq&  \frac{1}{2\eta}
	\end{eqnarray}
	Summing over (\ref{iq_final_1}) from $t=0$ to $m-1$, we have: 
	\begin{eqnarray}
		\label{iq10001}
		R_m^{s+1} &\leq& R^{s+1}_0 +  \sum\limits_{t=0}^{m-1} \biggl[ L_f - \frac{1}{2\eta} \biggr] \mathbb{E} \| \bar{x}_{t+1}^{s+1} - x_t^{s+1} \|^2
	\end{eqnarray}
	Because $R_m^{s+1} = \mathbb{E}[H(x_m^{s+1})] = \mathbb{E}[H(\tilde{x}^{s+1}) ] $ and $R_0^{s+1} = \mathbb{E} [H(x_0^{s+1})] = \mathbb{E} [H(\tilde{x}^s) ] $. Then adding up (\ref{iq10001}) from $s=0$ to $S-1$, we get:
	\begin{eqnarray}
		\sum\limits_{s=0}^{S-1} \sum\limits_{t=0}^{m-1} \bigg[  \frac{1}{2\eta} - L_f \bigg] \mathbb{E} \| \bar{x}_{t+1}^{s+1} - x_t^{s+1} \|^2 \leq H(\tilde{x}^0) - H(\tilde{x}^{S}) \leq H(\tilde x^0) - H(x^*)
	\end{eqnarray}
	where $x^*$ is the optimal solution. We define  $\mathcal{G}_\eta(x_t^{s+1}) = \frac{1}{\eta} (x_t^{s+1} - \bar{x}_{t+1}^{s+1}   ) $ and $x_a$ is uniformly selected from $\{\{x_t^{s+1}\}^{m-1}_{t=0} \}_{t=0}^{S-1}$, we have:  
	\begin{eqnarray}
		\mathbb{E} \| \mathcal{G}_\eta(x_a) \|^2 &\leq& \frac{2}{(1- 2\eta L_f)\eta}  \frac{H(\tilde x^0) - H(x^*)}{T}
	\end{eqnarray}
\end{proof}

\section{Other Lemmas}

\begin{lemma}[\cite{reddi2016fast}]
	For random variables $z_1, . . . , z_r$ are independent and mean 0, we have:
	\begin{eqnarray}
		\mathbb{E} [ \| z_1 + ... + z_r\|^2 ]  &=& \mathbb{E} [\|z_1\|^2 + ...+ \|z_r\|^2 ]
	\end{eqnarray}
	\label{ex_lem1}
\end{lemma}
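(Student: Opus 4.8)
This is the standard ``sum of variances'' identity for a finite family of independent, mean-zero random vectors, so the plan is a direct second-moment expansion. First I would write $\big\|\sum_{i=1}^r z_i\big\|^2 = \sum_{i=1}^r \|z_i\|^2 + \sum_{i \ne j}\langle z_i, z_j\rangle$, which is just bilinearity of the inner product. Taking expectations and using linearity of $\mathbb{E}$ reduces the claim to showing that every cross term vanishes, i.e.\ $\mathbb{E}\langle z_i, z_j\rangle = 0$ for $i \ne j$.

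For a fixed pair $i \ne j$, I would expand the inner product coordinatewise, $\langle z_i, z_j\rangle = \sum_k z_{i,k} z_{j,k}$, and use that $z_i$ and $z_j$ are independent (so each coordinate of $z_i$ is independent of each coordinate of $z_j$) together with $\mathbb{E}[z_i] = \mathbb{E}[z_j] = 0$; hence $\mathbb{E}[z_{i,k} z_{j,k}] = \mathbb{E}[z_{i,k}]\,\mathbb{E}[z_{j,k}] = 0$, and summing over $k$ gives $\mathbb{E}\langle z_i, z_j\rangle = \langle \mathbb{E}z_i, \mathbb{E}z_j\rangle = 0$. Substituting back yields $\mathbb{E}\big[\|z_1 + \cdots + z_r\|^2\big] = \sum_{i=1}^r \mathbb{E}\|z_i\|^2$. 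An equivalent route is induction on $r$: the base case $r=1$ is trivial, and for the step one writes $S_r = S_{r-1} + z_r$ with $S_{r-1}$ a measurable function of $z_1,\dots,z_{r-1}$, hence independent of $z_r$, so $\mathbb{E}\langle S_{r-1}, z_r\rangle = \langle \mathbb{E}S_{r-1}, \mathbb{E}z_r\rangle = 0$, the cross term drops from $\mathbb{E}\|S_r\|^2 = \mathbb{E}\|S_{r-1}\|^2 + 2\mathbb{E}\langle S_{r-1},z_r\rangle + \mathbb{E}\|z_r\|^2$, and the inductive hypothesis finishes it.

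There is essentially no obstacle in the proof itself; the only point requiring a little care is justifying the factorization $\mathbb{E}[z_{i,k}z_{j,k}] = \mathbb{E}[z_{i,k}]\,\mathbb{E}[z_{j,k}]$ in the vector-valued setting (and noticing that pairwise independence plus mean zero already suffices). The substantive work is deferred to the places where this lemma is invoked: there one must check that the per-index summands appearing in the definitions of $u_t^{s+1}$, $v_t^{s+1}$, $\widehat{G}_t^{s+1}$ and $\nabla\widehat{G}_t^{s+1}$ are, conditionally on the relevant history, independent with the appropriate mean (zero after centering), so that the identity applies termwise.
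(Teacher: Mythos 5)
Your proof is correct: the expansion of $\bigl\|\sum_i z_i\bigr\|^2$ into diagonal terms plus cross terms, with each cross term $\mathbb{E}\langle z_i,z_j\rangle$ vanishing by independence and zero mean (or equivalently the induction via $S_r=S_{r-1}+z_r$), is the standard argument and establishes the identity as stated. The paper itself gives no proof of this lemma --- it simply imports it from the cited reference --- so there is no alternative route to compare against; your closing remark is also the right one to make, namely that the only substantive issue is verifying, at each point of invocation (the mini-batch summands defining $u_t^{s+1}$, $v_t^{s+1}$, $\widehat{G}_t^{s+1}$, $\nabla\widehat{G}_t^{s+1}$), that the centered per-sample terms are (conditionally) independent and mean zero, which holds because the indices are drawn uniformly with replacement.
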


\begin{lemma}
	For any $z_1,...,z_r$, it holds that:
	\begin{eqnarray}
		\| z_1 + ... + z_r\|^2 &  \leq  & r (\|z_1\|^2 +...+ \|z_r\|^2 )
	\end{eqnarray}
	\label{ex_lem2}
\end{lemma}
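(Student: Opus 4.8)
The plan is to reduce the claim to two elementary facts: the triangle inequality for the Euclidean norm and the Cauchy--Schwarz inequality in $\mathbb{R}^r$ (equivalently, convexity of $t \mapsto t^2$). First I would apply the triangle inequality to get
\[
  \|z_1 + \cdots + z_r\| \;\le\; \|z_1\| + \cdots + \|z_r\|,
\]
and then square both sides. Since the right-hand side is a sum of nonnegative reals, squaring is monotone and the inequality is preserved, so $\|z_1 + \cdots + z_r\|^2 \le (\|z_1\| + \cdots + \|z_r\|)^2$.

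The second step is to bound $\big(\sum_{i=1}^r \|z_i\|\big)^2 \le r \sum_{i=1}^r \|z_i\|^2$. I would obtain this from Cauchy--Schwarz $\langle a, b \rangle^2 \le \|a\|^2 \|b\|^2$ applied with $a = (1, \ldots, 1)$ and $b = (\|z_1\|, \ldots, \|z_r\|)$ in $\mathbb{R}^r$: then $\langle a, b \rangle = \sum_i \|z_i\|$, $\|a\|^2 = r$, and $\|b\|^2 = \sum_i \|z_i\|^2$. An equivalent route uses Jensen's inequality for the convex function $t \mapsto t^2$, namely $\big(\tfrac{1}{r}\sum_i \|z_i\|\big)^2 \le \tfrac{1}{r}\sum_i \|z_i\|^2$, and then multiplies through by $r^2$. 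Chaining the two displayed bounds gives the stated inequality.

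There is essentially no obstacle here; the only points that deserve a moment's care are that $\|\cdot\|$ must be a genuine norm for the triangle inequality to apply (it is --- the Euclidean norm on $\mathbb{R}^N$ or $\mathbb{R}^M$) and that squaring preserves the inequality because both sides are nonnegative. If one prefers to bypass the triangle inequality, an alternative is to expand the square through the inner product: $\|z_1 + \cdots + z_r\|^2 = \sum_{i,j} \langle z_i, z_j \rangle$, and then use $2\langle z_i, z_j \rangle \le \|z_i\|^2 + \|z_j\|^2$ for each pair to get $\sum_{i,j} \langle z_i, z_j \rangle \le \tfrac{1}{2}\sum_{i,j}(\|z_i\|^2 + \|z_j\|^2) = r\sum_i \|z_i\|^2$, which is exactly the claim.
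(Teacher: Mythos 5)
Your proof is correct, and in fact the paper offers no proof of this lemma at all---it is stated as a standard fact---so your argument fills in something the authors simply took for granted. Either of your routes works: the triangle-inequality-plus-Cauchy--Schwarz chain, or the direct expansion $\|z_1+\cdots+z_r\|^2=\sum_{i,j}\langle z_i,z_j\rangle$ combined with $2\langle z_i,z_j\rangle\leq\|z_i\|^2+\|z_j\|^2$; the second is arguably the cleaner one to cite inline, since it avoids invoking the triangle inequality and monotonicity of squaring as separate steps. No gaps.
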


\begin{remark}
	\label{foradmm}
	In this remark, we follow the notations in \cite{svradmm}, please check their paper for details.  As per Theorem 1 in \cite{svradmm}, the overall query complexity for com-SVR-ADMM is $O\biggl( (m + n + \kappa^4)\log (1/\varepsilon) \biggr)$ and its convergence rate in terms of query is $O\biggl( \rho^{ \frac{T}{m + n + \kappa^4} } \biggr)$. 	
	\textit{(\textbf{Note:} Corresponding notations in our paper, $n_1 \leftarrow n$, $n_2 \leftarrow m$, $m \leftarrow K$, $A\leftarrow N$, therefore, their query complexity is represented as $O\biggl( \rho^{ \frac{T}{n_1 + n_2 + \kappa^4} } \biggr)$ in Table \ref{table:methods}.)}
\end{remark}

\begin{proof}
	As per Theorem 1 in \cite{svradmm}, they prove that:
	\begin{eqnarray}
		\gamma_1 \mathbb{E} G(\tilde u^s) &\leq& \gamma_2 G(\tilde{u}^{s-1})
	\end{eqnarray}
	where $\gamma_1$ and $\gamma_2$ is defined as follows:
	\begin{eqnarray}
		\gamma_1 &=& (2\eta - \frac{32\eta^2 C_G^4L_f^2}{\mu_F N} -  \frac{48\eta^2 L_F^2 + 8 \eta D C_G L_f L_G N^{-0.5}}{\mu_F}  ) K 
	\end{eqnarray}
	
	\begin{eqnarray}
		\gamma_2 = (K+1) (  \frac{32\eta^2 C_G^4L_f^2}{\mu_F N} + \frac{48\eta^2 L_F^2 + 8 \eta D C_G L_f L_G N^{-0.5}}{\mu_F}     ) + \frac{2}{\mu_F} + 
		\frac{2\eta \rho \|A^TA\|}{\mu_F} + \frac{2L_F \eta}{\rho \sigma_{\min} (AA^T)}
	\end{eqnarray}
	Their conclusion is true under \textbf{two conditions:} (1) $	\gamma_1 > 0$; (2) $	\frac{\gamma_2}{\gamma_1} <  1$. To make $\gamma_1 > 0 $, following inequality should be satisfied that:
	\begin{eqnarray}
		\eta \biggl( 2 - \frac{32\eta C_G^4L_f^2}{\mu_F N} -  \frac{48\eta L_F^2 }{\mu_F} -  \frac{8  D C_G L_f L_G N^{-0.5}}{\mu_F} \biggr) &>& 0
	\end{eqnarray} 
	Suppose that: 
	\begin{eqnarray}
		\frac{32\eta C_G^4L_f^2}{\mu_F N} &<& \frac{1}{4} \\
		\frac{48\eta L_F^2 }{\mu_F} &<& \frac{1}{4} \\
		\frac{8  D C_G L_f L_G N^{-0.5}}{\mu_F} &<&\frac{1}{4}
	\end{eqnarray}
	we have $\gamma_1 > \frac{5\eta K}{4}$.  Therefore, following inequalities should be hold that: 
	\begin{eqnarray}
		N &>& \max  \{ \frac{128 \eta C_G^4 L_f^2}{\mu_F}, \frac{32^2D^2C_G^2L_f^2L_G^2}{\mu_F^2} \} \\
		\eta & < & \frac{\mu_F}{192L_F^2}
	\end{eqnarray}
	Follow the definition of $\kappa$ in the paper, we have $\kappa = \max\{\frac{L_f}{\mu_F}, \frac{L_G}{\mu_F}, \frac{L_F}{\mu_F}  \}$, thus $N$ should be proportional to $\kappa^2$. Given $\gamma_1 = \frac{5\eta K}{4}$, in order to make $\frac{\gamma_2}{\gamma_1}< 1$, following inequality should be satisfied that:
	\begin{eqnarray}
		\frac{3(K+1)}{5K} + \frac{8}{5\eta \mu_F K} +  
		\frac{8 \rho \|A^TA\|}{5  \mu_F K}  + \frac{8L_F }{5  \rho \sigma_{\min} (AA^T) K} &<& 1
	\end{eqnarray}
	To make the above inequality true, at least, it should holds that:
	\begin{eqnarray}
		\frac{8}{5 \eta \mu_F K} &<& 1 
	\end{eqnarray}
	Therefore, we can also obtain that: 
	\begin{eqnarray}
		\frac{8 \times 192 {L_F}^2}{5  \mu_F^2 } &<& K 
	\end{eqnarray}
	So far, we know that $K$ is also proportional to $\kappa^2$. In \cite{svradmm}, the authors claimed that their overall query complexity is $O\biggl((m+n+KN)\log(1/\varepsilon) \biggr)$. Because $K$ and $N$ are proportional to $\kappa^2$, their query complexity can also be represented as $O\biggl((m+n + \kappa^4) \log(1/\varepsilon) \biggr) $ and convergence rate in terms of query is $O\biggl( \rho^{ \frac{T}{m + n + \kappa^4} } \biggr)$. 	
	
\end{proof}

\end{document}